\def\eqref#1{Eq.~(\ref{#1})}
\def\1{\bm{1}}
\DeclareMathAlphabet{\mathsfit}{\encodingdefault}{\sfdefault}{m}{sl}
\SetMathAlphabet{\mathsfit}{bold}{\encodingdefault}{\sfdefault}{bx}{n}
\DeclareMathOperator*{\argmin}{arg\,min}
\newtheorem{proof sketch}{Proof sketch}
\algrenewcommand\algorithmicrequire{\textbf{Input:}}
\algrenewcommand\algorithmicensure{\textbf{Output:}}
\definecolor{Gray}{gray}{0.9}
\newcommand{\update}[1]{{\color{black}#1}}
\newcommand{\errco}[1]{\textcolor{black}{#1}}
\newcommand{\errfl}[1]{\textcolor{black}{#1}}
\newcommand{\errin}[1]{\textcolor{black}{#1}}
\newcommand{\name}{\textsc{Daemon}}
\newcommand{\qopt}{q_{\mathrm{opt}}}
\newcommand{\muopt}{\boldsymbol{\mu}_{\mathrm{opt}}}
\title{Language Model Decoding as Direct Metrics Optimization}
\author{Haozhe Ji \quad Pei Ke$^{*}$ \quad Hongning Wang \quad Minlie Huang\thanks{Corresponding Author.} \\
The CoAI Group, DCST, BNRist, Tsinghua University, Beijing 100084, China \\
\texttt{jihaozhe@gmail.com} \quad \texttt{aihuang@tsinghua.edu.cn}
}
\begin{document}

\maketitle

\begin{abstract}
Despite the remarkable advances in language modeling, current mainstream decoding methods still struggle to generate texts that align with human texts across different aspects. 
In particular, sampling-based methods produce less-repetitive texts which are often disjunctive in discourse, while search-based methods maintain topic coherence at the cost of increased repetition. Overall, these methods fall short in achieving holistic alignment across a broad range of aspects. 
In this work, 
we frame decoding from a language model as an optimization problem with the goal of strictly matching the expected performance with human texts measured by multiple metrics of desired aspects simultaneously.
The resulting decoding distribution enjoys an analytical solution that 
scales the input language model distribution via a sequence-level energy function defined by these metrics.
And most importantly, we prove that this induced distribution is guaranteed to improve the perplexity on human texts, which suggests a better approximation to the underlying distribution of human texts.
To facilitate tractable sampling from this globally normalized distribution, we adopt the Sampling-Importance-Resampling technique.
Experiments on various domains and model scales demonstrate the superiority of our method in metrics alignment with human texts and human evaluation over strong baselines.


\end{abstract}

\section{Introduction}

\begin{wrapfigure}{r}{0.5\textwidth}
	\vspace{-7pt}
	\centering
	\small
	\includegraphics[width=0.5\textwidth]{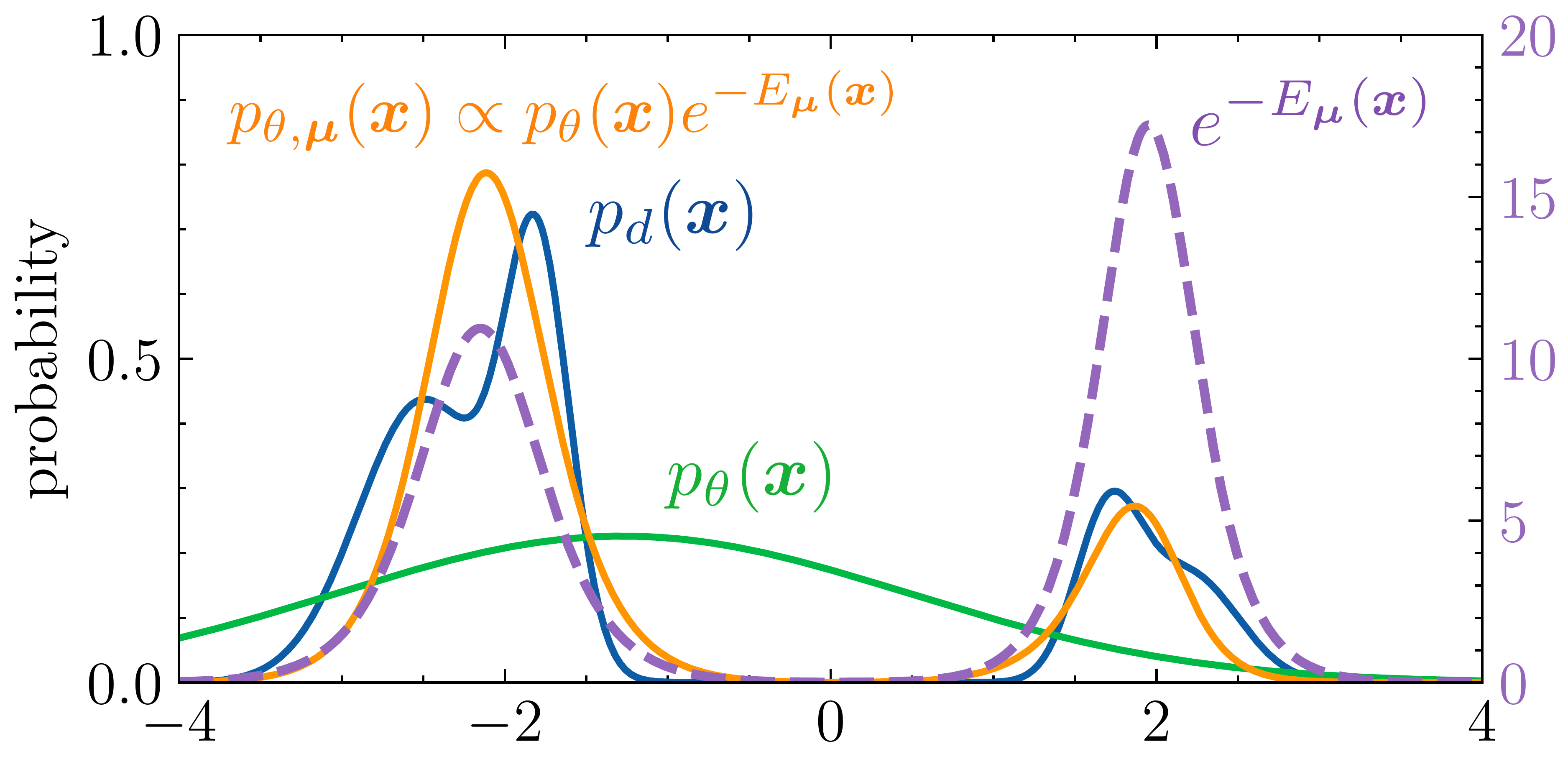}
	\caption{The decoding distribution $p_{\theta,\boldsymbol{\mu}}$ induced by \name{} scales the input LM distribution $p_\theta$ with a sequence-level energy function $E_{\boldsymbol{\mu}}$, which leads to a more accurate recovery of the underlying data distribution $p_d$.}
	\label{fig:daemon_vis}
	\vspace{-7pt}
\end{wrapfigure}

Although pre-trained on large corpora of human texts with scaled up sizes, existing auto-regressive language models (LMs)~\citep{radford2019gpt2,brown2020gpt3,opt} are still struggling to produce human-like texts measured in various aspects, such as repetition, coherence, and consistency~\citep{mauve,DBLP:conf/acl/DouFKSC22}. Existing decoding methods are mainly driven to address two main mis-specifications of an LM's distribution: (i) The long tail of the distribution is \textit{unreliable}~\citep{holtzman2020topp}, such that sampling from these low-probability regions often produces low-quality contents that are incoherent. (ii) The mode of the distribution is \textit{degenerated}~\citep{rep}, where samples with high probabilities exhibit low diversity with repetitive patterns. 
As a result, sampling-based decoding methods~\citep{topk,holtzman2020topp,typical} use various truncation strategies to avoid sampling from the unreliable long tail of the distribution,
while recent search-based methods~\citep{contrastive_decoding,contrastive_search} incorporate additional contrastive objectives to avoid the collapse of degenerated repetitions. 
Since these two mis-specifications reside at opposing extremes of the probability spectrum, current decoding methods inevitably concentrate on just one of them which addresses only a limited subset of aspects. Although heuristic designs and sophisticated hyper-parameter tuning allow trade-offs, these approaches usually cannot effectively align with human texts with respect to a broad range of critical aspects simultaneously. 

Attempts have been made to fix the mis-specification issue of LM distribution by directly augmenting the standard Maximum Likelihood Estimation (MLE) with auxiliary training objectives~\citep{rep,contrastive_search,breakloop}. 
However, exposure bias~\citep{degen2exposure,exposure_matters} prevents the effectiveness of such attempts. Specifically, since during training the auto-regressive LM is conditioned on the ground-truth context, it is not guaranteed that the imposed properties in these training objectives would be preserved during decoding time, where the context is progressively generated by the LM itself. On the other hand, approaches based on Reinforcement Learning (RL)~\citep{DBLP:journals/corr/RanzatoCAZ15,seqgan} address the exposure bias issue, but often encounter challenges 
to maintain proximity to the distribution of human texts (characterized by a low perplexity)~\citep{gan_falls}.
Overall, these methods do not guarantee a general enhancement over the standard training paradigm, owing to the potential conflicts between their designated objectives and MLE~\citep{scalegrad}. 
More related work discussion is provided in Appendix \ref{appendix:related_work}. 


In this work, we focus on the decoding route and present a novel framework, \textbf{D}ecoding \textbf{A}s Dir\textbf{E}ct \textbf{M}etrics \textbf{O}ptimizatio\textbf{N} (\name) that explicitly targets at aligning desired aspects with human texts. \name{} frames decoding from a language model as an optimization problem with the goal of locating the optimal decoding distribution where sampled texts can strictly match with human texts in multiple evaluation metrics 
simultaneously. 
Formally, given the input LM distribution $p_\theta$ learned on the human text distribution $p_d$, 
\name{} searches for the decoding distribution $q$ that minimizes the \textit{reverse} Kullback-Leibler divergence (KL), $D_{\textrm{KL}}(q\|p_\theta)$, subject to the constraints of matching the expected evaluation metric scores under $q$ and $p_d$. 
We choose the reverse KL to induce the decoding distribution $q$, as it forces $q$ to recover the major probability masses within the support of $p_\theta$~\citep{DBLP:journals/corr/Huszar15,malinin2019reverse}, which contains mostly high-quality samples. 
Moreover, besides directly enforcing alignment on chosen metrics, we also rigorously prove that the optimization problem guarantees an improvement of the solution over the input LM in perplexity, which indicates a more general gain in aligning with human texts. 


In addition to the theoretical guarantee, the decoding distribution induced by \name{} also enjoys an analytical solution denoted as $p_{\theta,\boldsymbol{\mu}}$. It scales the \textit{locally normalized} LM distribution $p_\theta$ with a sequence-level energy function $E_{\boldsymbol{\mu}}$ which depicts the underlying distribution $p_d$ from various perspectives by satisfying the corresponding constraints. 
In Figure \ref{fig:daemon_vis}, we visualize $p_{\theta,\boldsymbol{\mu}}$ in an illustrative example where the energy captures the disjoint regions of modes in $p_d$, which empowers the input LM distribution $p_\theta$ to facilitate a better approximation of $p_d$.
To enable tractable sampling from $p_{\theta,\boldsymbol{\mu}}$, which is globally normalized over the space of all possible sequences, we adopt the Sampling-Importance-Resampling (SIR) technique~\citep{db1988SIR,smith1992bayesian} that first samples candidates from $p_\theta$ and then resamples based on the importance weight defined by the energy function. 
We empirically demonstrate the effectiveness of \name{} in open-ended text generation by considering a wide range of critical aspects including repetition, coherence, diversity, and information content across different model scales and data domains. 
Experimental results show that \name{} outperforms strong decoding baselines in both automatic evaluation of metrics alignment with human texts 
and human evaluation.





\section{Method: Decoding as Direct Metrics Optimization}


We consider conditional language generation from a pre-trained language model specified by the distribution $p_\theta$, where the model is provided with a relatively short prefix $\boldsymbol{x}_{\le t_0}=\{x_t\}_{t=1}^{t_0}$ of length $t_0$ and required to generate a continuation that results in a full text $\hat{\boldsymbol{x}}_{\le T}=\{\hat{x}_t\}_{t=1}^T$ of total length $T$. In the following, the subscript of $\boldsymbol{x}_{\le T}$ is omitted for convenience. Instead of directly sampling from $p_\theta$, we look for a decoding distribution induced from $p_\theta$ to produce human-like texts measured by a set of chosen metrics. For example, in the canonical top-$k$ sampling~\citep{topk}, the decoding distribution is obtained by truncating the conditional distribution $p_\theta(x_t|\boldsymbol{x}_{1:t-1})$ to keep the top-$k$ candidates at every decoding step, so as to improve the reliability of generated content.




Ideally, a perfect decoding distribution $\qopt$ assigns an arbitrary text sample ${\boldsymbol{x}}$ with the probability equals to $p_d({\boldsymbol{x}})$, where $p_d$ is the underlying distribution of human texts. In practice, this is infeasible since we only have samples from $p_d$, rather than $p_d$ itself. However, given a text evaluation metric we are interested in (such as repetition and coherence), formally $f: \mathcal{X}\rightarrow \mathbb{R}$ that maps $\boldsymbol{x}$ in the text space $\mathcal{X}$ to a real value, an alternative criterion for measuring the closeness from $\qopt$ to $p_d$ can be achieved by matching the expectation of $f$ under $\qopt$ and $p_d$, i.e., $\left|\mathbb{E}_{\hat{\boldsymbol{x}}\sim\qopt}[f(\hat{\boldsymbol{x}})]-\mathbb{E}_{\boldsymbol{x}\sim p_d}[f(\boldsymbol{x})]\right|$. This expectation-matching criterion, commonly employed in prior studies~\citep{holtzman2020topp,typical,contrastive_search} as an empirical evaluation of the resemblance of generated texts against human texts. This forms the basis of our proposed optimization-based decoding framework that 
directly aligns the generated texts with human texts against the set of chosen text evaluation metrics. 



\subsection{Formulation of the Optimization Problem}
\label{sec:formulation}

At the core of our proposed decoding framework, we look for the optimal solution $\qopt$ of the following constrained optimization problem which searches for the decoding distribution $q$ closest to the given LM distribution $p_\theta$ and strictly matching the expectations on the generated texts with that of human texts measured by a set of chosen evaluation metrics:
\begin{align}
    \label{optim_problem:min}
     &\qopt = \underset{q\in\mathcal{P}}{\argmin} \ D_{\textrm{KL}}(q \| p_\theta) \\
     \nonumber
    &s.t. ~ \mathbb{E}_{\hat{\boldsymbol{x}}\sim q}[f_k(\hat{\boldsymbol{x}})]=\mathbb{E}_{\boldsymbol{x}\sim p_d}[f_k(\boldsymbol{x})],\quad k\in\{1,\cdots,K\},
\end{align}
where $\boldsymbol{f}=\{f_k\}_{k=1}^K$ is a set of evaluation metrics 
  we concern, and $\mathcal{P}$ is the set of all probability densities in the input space $\mathcal{X}$. 

The formulation of our proposed optimization problem hinges on our key insight of constructing a decoding distribution from a language model to acquire samples that closely resemble human texts. The constraints, as defined to match the performance of evaluated metrics on generations with those obtained on  
human texts, explicitly ensure this goal in expectation. \update{
The \emph{reverse} KL divergence in the optimization objective, i.e., $D_{\textrm{KL}}(q\|p_\theta)$, restricts the decoding distribution $q$ to deviate minimally from the LM distribution $p_\theta$ by encouraging \textbf{mode-seeking} behavior, which satisfies the quality-demanding nature of decoding. Although the forward KL is extensively employed as an optimization objective in learning data-driven probabilistic models \citep{radford2019gpt2}, its induced distribution is shown to mismatch with the quality assessment of human~\citep{demonstration} by overestimating the long tail of the target distribution~\citep{tailr} due to its \textbf{mean-seeking} behavior. More discussion is provided in Appendix \ref{appendix:kl_div_discuss}. 
We believe the learning and decoding phases posit different goals: the former is to capture all modes in the data, while the latter is to decode high-quality ones.} Hence, we require the decoding distribution to only explore within the support of the given LM distribution, which is naturally realized by minimizing the reverse KL. 
Existing truncation-based sampling~\citep{topk,rep,typical} can be deemed as a heuristic that shares the same spirit of ours in maintaining a finite reverse KL, since the support of the truncated distribution is always the strict subset of the support of the given LM distribution. 

The formulation of our optimization problem is also known as information projection in previous literature of information geometry~\citep{Csiszr2000InformationPR,DBLP:journals/entropy/Nielsen20c}, which can be deemed as finding the projection of $p_\theta$ on the manifold of distributions that constrains $p_d$. In the following proposition, we show that it actually leads to a nice analytical solution. The full proof of Proposition \ref{prop:solution} is provided in Appendix \ref{appendix:prop1}.
 


\begin{restatable}{prop}{FirstProp}
\label{prop:solution}
The distribution that solves the optimization problem (\ref{optim_problem:min}) is in the form of:
\begin{equation}
\label{equ:residual_energy}
{p}_{\theta,\boldsymbol{\mu}}(\boldsymbol{x})\propto p_\theta(\boldsymbol{x})\exp\Big[-E_{\boldsymbol{\mu}}(\boldsymbol{x})\Big],\quad \forall \boldsymbol{x}\in S(p_{\theta,\boldsymbol{\mu}})
\end{equation}
where $E_{\boldsymbol{\mu}}(\boldsymbol{x})=\boldsymbol{\mu}^\top \boldsymbol{f}(\boldsymbol{x})$ and $S(p)=\{\boldsymbol{x}: p(\boldsymbol{x})>0\}$ is the support of distribution $p$. $\boldsymbol{\mu}\in \mathbb{R}^K$ is determined by the constraints in (\ref{optim_problem:min}).
\end{restatable}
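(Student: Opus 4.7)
The plan is to treat (\ref{optim_problem:min}) as a convex program on the space of probability densities over $\mathcal{X}$ and solve its first-order optimality conditions directly. The objective $D_{\textrm{KL}}(q\|p_\theta)$ is strictly convex in $q$, and the $K$ moment-matching constraints together with the normalization $\int q\,d\boldsymbol{x}=1$ are all linear in $q$, so the minimizer is unique whenever the feasible set is nonempty and is fully characterized by stationarity of the Lagrangian. This convexity is what lets a variational/KKT argument serve as a complete proof rather than just a necessary condition.

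Concretely, I would attach Lagrange multipliers $\boldsymbol{\mu}\in\mathbb{R}^K$ to the $K$ expectation constraints and a scalar $\lambda$ to the normalization constraint, forming
\begin{equation*}
L(q, \boldsymbol{\mu}, \lambda) = \int q(\boldsymbol{x})\log\frac{q(\boldsymbol{x})}{p_\theta(\boldsymbol{x})}\,d\boldsymbol{x} + \sum_{k=1}^{K}\mu_k\big(\mathbb{E}_{q}[f_k]-\mathbb{E}_{p_d}[f_k]\big) + \lambda\Big(\int q(\boldsymbol{x})\,d\boldsymbol{x}-1\Big),
\end{equation*}
then compute the functional derivative $\delta L/\delta q(\boldsymbol{x})$ and set it to zero pointwise on $S(q)$. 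This derivative evaluates to $\log(q(\boldsymbol{x})/p_\theta(\boldsymbol{x})) + 1 + \lambda + \boldsymbol{\mu}^\top\boldsymbol{f}(\boldsymbol{x})$, which rearranges immediately to the claimed tilted form $q(\boldsymbol{x})\propto p_\theta(\boldsymbol{x})\exp(-\boldsymbol{\mu}^\top\boldsymbol{f}(\boldsymbol{x}))$, with the $e^{-(1+\lambda)}$ factor absorbed into the partition function fixed by $\int q=1$. Plugging this solution back into the $K$ moment constraints then pins down $\boldsymbol{\mu}$ as the solution of a (generally nonlinear) system of $K$ equations in $K$ unknowns.

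The remaining subtleties I would flag explicitly in the writeup are: (i) any feasible $q$ with finite KL must satisfy $S(q)\subseteq S(p_\theta)$ (otherwise the $\log(q/p_\theta)$ term blows up), which both justifies taking the pointwise derivative on $S(p_\theta)$ and matches the qualifier ``$\forall\,\boldsymbol{x}\in S(p_{\theta,\boldsymbol{\mu}})$'' appearing in the statement; and (ii) stationarity is sufficient, not merely necessary, because the KL is strictly convex in $q$ and the constraint set is affine, so the unique feasible stationary point is the global minimizer. I do not expect a conceptual obstacle here --- the result is the classical information-projection / maximum-entropy duality in the style of Csisz\'ar --- and the main piece of bookkeeping is just restricting the domain of the optimization to densities supported on $S(p_\theta)$ so that $\delta L/\delta q(\boldsymbol{x})=0$ is literally a pointwise identity on that set.
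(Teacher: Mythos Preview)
Your approach is correct and essentially the same as the paper's: both form the Lagrangian for the convex program and read off the exponential-tilting form from the stationarity/KKT conditions. The only minor differences are that the paper explicitly attaches multipliers $\lambda(\boldsymbol{x})$ to the non-negativity constraints $q(\boldsymbol{x})\ge 0$ and then eliminates them via complementary slackness on $S(q_{\textrm{opt}})$ (whereas you handle positivity implicitly by restricting to the support), and your strict-convexity remark establishing sufficiency is a point the paper leaves unstated.
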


The unnormalized form of ${p}_{\theta,\boldsymbol{\mu}}(\boldsymbol{x})$, also known as the Energy-Based Model (EBM)~\citep{Rosenfeld2001whole_sentence,hinton2002training,lecun2006tutorial}, 
takes advantage from both the given LM distribution $p_\theta$ and the energy function $E_{\boldsymbol{\mu}}(\boldsymbol{x})$ that serves as a sequence-level assessment about the satisfaction of constraints measured by the evaluation metrics. The contribution of individual metrics to the overall alignment performance is characterized by the derived coefficients $\boldsymbol{\mu}=\{\mu_k\}_{k=1}^K$. Decoding from \eqref{equ:residual_energy} requires determining  $\boldsymbol{\mu}$ and tractable sampling from the normalized density, which will be discussed in \S{\ref{sec:decode}}. In the next subsection, we take a step further and demonstrate that the optimal solution of the problem (\ref{optim_problem:min}) guarantees a theoretical improvement in perplexity of human texts. 



\subsection{Theoretical Improvement in Perplexity}
\label{sec:improve}

Although explicitly driving the generation to align with human texts under the chosen evaluation metrics is appealing, we are still confronted with the question of whether the resulting decoding distribution is generally a better approximation to the underlying distribution of human texts. For most existing heuristic decoding methods, a distribution-level evaluation (e.g., perplexity) is infeasible because of their ad-hoc treatments on the input LM distribution. For example, distribution truncation~\citep{topk,rep,typical} leads to a sparse support which is smaller than the underlying distribution of human texts, while heuristic searching algorithms~\citep{contrastive_decoding, contrastive_search} such as beam search do not have a parametric decoding distribution. \citet{sparse_decoding} proposed a variant of the standard perplexity, $\epsilon$-perplexity by smoothing a sparse distribution, which still can not faithfully reflect the true perplexity of the truncated distribution.

For the decoding distribution derived from the proposed optimization problem, we show that not only is the perplexity feasible to compute, but it also improves the perplexity of human texts against the original LM distribution. The full proof is provided in Appendix \ref{appendix:prop2}. 
\begin{restatable}{prop}{SecondProp}
\label{prop:improvement}
    The optimal solution $\qopt$ of the optimization problem (\ref{optim_problem:min}) satisfies:
    \begin{enumerate}
        \item $S(\qopt)\supseteq S(p_d)$, where $S(p)=\{\boldsymbol{x}:p(\boldsymbol{x}) >0\}$. 
        \item $H(p_d, \qopt) = H(p_d, p_\theta) - D_{\textsc{KL}}(\qopt\|p_\theta)$, where $H(p,q)=-\sum_{\boldsymbol{x}} p(\boldsymbol{x})\log q(\boldsymbol{x})$. 
    \end{enumerate}
\end{restatable}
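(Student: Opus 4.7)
The plan is to handle the two claims separately, both by appealing to the closed form $\qopt(\boldsymbol{x}) \propto p_\theta(\boldsymbol{x})\exp[-E_{\boldsymbol{\mu}}(\boldsymbol{x})]$ already established in Proposition \ref{prop:solution}.

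For part 1, I would observe that the exponential factor $\exp[-E_{\boldsymbol{\mu}}(\boldsymbol{x})]$ is strictly positive whenever $E_{\boldsymbol{\mu}}(\boldsymbol{x})$ is finite, so the support $S(\qopt)$ coincides with $S(p_\theta)$. The claim then reduces to the mild regularity assumption $S(p_\theta) \supseteq S(p_d)$, which is essentially automatic for standard softmax-parameterized autoregressive LMs: every next-token conditional probability is strictly positive, so any text of finite length carries strictly positive joint probability under $p_\theta$, whence any sequence in $S(p_d)$ also lies in $S(p_\theta)$ and hence in $S(\qopt)$.

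For part 2, I would take logarithms in the closed form to obtain $\log \qopt(\boldsymbol{x}) = \log p_\theta(\boldsymbol{x}) - E_{\boldsymbol{\mu}}(\boldsymbol{x}) - \log Z$, where $Z = \sum_{\boldsymbol{x}} p_\theta(\boldsymbol{x}) \exp[-E_{\boldsymbol{\mu}}(\boldsymbol{x})]$ is the partition function. Taking the expectation of $-\log \qopt$ under $p_d$ gives $H(p_d, \qopt) = H(p_d, p_\theta) + \mathbb{E}_{p_d}[E_{\boldsymbol{\mu}}] + \log Z$, while expanding $D_{\textrm{KL}}(\qopt\|p_\theta)$ from its definition yields $D_{\textrm{KL}}(\qopt\|p_\theta) = -\mathbb{E}_{\qopt}[E_{\boldsymbol{\mu}}] - \log Z$. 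The pivotal step that glues these two expressions together is to exploit the linearity $E_{\boldsymbol{\mu}}(\boldsymbol{x}) = \boldsymbol{\mu}^\top \boldsymbol{f}(\boldsymbol{x})$ together with the feasibility constraints $\mathbb{E}_{\qopt}[f_k] = \mathbb{E}_{p_d}[f_k]$ imposed in problem (\ref{optim_problem:min}), which collectively imply $\mathbb{E}_{p_d}[E_{\boldsymbol{\mu}}] = \mathbb{E}_{\qopt}[E_{\boldsymbol{\mu}}]$. Substituting this equality into the two expressions above immediately yields the desired identity.

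I expect the main obstacle to be conceptual rather than computational: the support statement in part 1 must be strong enough to guarantee that $H(p_d, \qopt)$ is finite (so the identity in part 2 is even meaningful), and some mild regularity is required so that the partition function $Z$ and the two expectations $\mathbb{E}_{p_d}[E_{\boldsymbol{\mu}}]$, $\mathbb{E}_{\qopt}[E_{\boldsymbol{\mu}}]$ are all well-defined. Once these conditions are stated explicitly, no sophisticated analysis is needed and the remaining algebra is routine.
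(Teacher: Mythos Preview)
Your argument is correct but proceeds along a genuinely different route from the paper. The paper never invokes the closed form from Proposition~\ref{prop:solution}; instead it runs a first-order optimality argument on the segment $p_\alpha=(1-\alpha)\qopt+\alpha p_d$, computes $\partial_\alpha D_{\textrm{KL}}(p_\alpha\|p_\theta)\big|_{\alpha=0}=H(p_d,p_\theta)-H(p_d,\qopt)-D_{\textrm{KL}}(\qopt\|p_\theta)$, and then uses optimality of $\qopt$ to pin this derivative between $0$ (right limit) and $0$ (left limit, after extending the segment to small $\alpha<0$). Part~1 falls out because a zero of $\qopt$ inside $S(p_d)$ would drive the derivative to $-\infty$; part~2 is the derivative vanishing. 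Your approach is more elementary and transparent: once you have $\log\qopt=\log p_\theta-E_{\boldsymbol{\mu}}-\log Z$, the identity in part~2 is pure algebra plus the moment-matching constraints $\mathbb{E}_{\qopt}[\boldsymbol{f}]=\mathbb{E}_{p_d}[\boldsymbol{f}]$, which is exactly the ``Pythagorean'' computation for exponential families. The paper's argument buys generality (it would go through for any convex feasible set containing $p_d$, not just linear moment constraints), while yours buys simplicity and makes explicit where the constraints are actually used. One small caveat on your part~1: Proposition~\ref{prop:solution} as stated only asserts the exponential form on $S(\qopt)$, so concluding $S(\qopt)=S(p_\theta)$ strictly requires either the KKT stationarity condition from its proof or the stronger global statement; you may want to say that explicitly. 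Your added assumption $S(p_\theta)\supseteq S(p_d)$ is also implicitly needed in the paper's argument (otherwise $H(p_d,p_\theta)=+\infty$ and the derivative computation degenerates), so making it explicit is a virtue, not a gap.
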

\begin{proof}[Proof sketch]
The proof starts with the convexity of the set $\mathcal{C}$ of distributions that satisfy the constraints in \eqref{optim_problem:min}. We then consider $p_\alpha = (1-\alpha)\qopt + \alpha p_d \in \mathcal{C}$, for $\alpha \in [0,1]$. The key insight is the following observation:
\begin{equation}
\label{equ:connection}
    \frac{\partial}{\partial\alpha}D_{\textrm{KL}}(p_\alpha\|p_\theta)\Big|_{\alpha=0} = 
    H(p_d,p_\theta) - H(p_d,\qopt) - D_{\textsc{KL}}(\qopt\|p_\theta).
\end{equation}
$\partial D_{\textrm{KL}}(p_\alpha\|p_\theta)/\partial \alpha$ can also be written as the limit of $[D_{\textrm{KL}}(p_\alpha\|p_\theta) - D_{\textrm{KL}}(\qopt\|p_\theta)]/\alpha$ which is non-negative when $\alpha \rightarrow 0^+$ due to the optimality of $\qopt$. 
Therefore, for \eqref{equ:connection} to be non-negative, we must have $q_{\textrm{opt}}(\boldsymbol{x})\ne 0$ for any $\boldsymbol{x}\in S(p_d)$ (otherwise it converges to $-\infty$), which proves the first claim. Next, given $S(\qopt)\supseteq S(p_d)$, there exists some $\alpha'<0$ such that $p_{\alpha'}$ is a probability density function, which by definition also belongs to $\mathcal{C}$. Therefore, $[D_{\textrm{KL}}(p_{\alpha'}\|p_\theta) - D_{\textrm{KL}}(\qopt\|p_\theta)]/{\alpha'}$ is non-positive when $\alpha'\rightarrow0^-$, leading to $\partial D_{\textrm{KL}}(p_\alpha\|p_\theta)/\partial \alpha|_{\alpha'=0}=0$, which proves the second claim.
\end{proof}


The first outcome of Proposition \ref{prop:improvement} establishes the feasibility of computing perplexity under ${p}_{\theta,\boldsymbol{\mu}}$ when evaluated using the underlying human text distribution $p_d$. \update{And the second result reveals the perplexity improvement over $p_\theta$:  $2^{H(p_d,\qopt)} < 2^{H(p_d,p_\theta)}$, due to the non-negativity of $D_{\textrm{KL}}(\qopt\|p_\theta)$. Note that the perplexity of $q$ is defined as $2^{H(p_d, q)}$.}

Intuitively, more powerful constraints in the optimization problem that better measure the alignment with human texts cause a larger deviation from the input LM distribution, which in turn leads to a better approximation of underlying human text distribution, and thus a lower perplexity. 

\subsection{Decoding from the Optimal Solution}
\label{sec:decode}
In this section, we describe the method to decode from the sampling distribution derived from the optimization problem (\ref{optim_problem:min}). First, we describe our method to estimate the coefficients $\boldsymbol{\mu}$ by satisfying the constraints with a conditional proposal distribution. Then we introduce a tractable sampling method to obtain samples from the decoding distribution defined by the EBM.

\subsubsection{Coefficients Estimation}
\label{sec:optim_mu}

The only degrees of freedom in the analytical solution of the optimal decoding distribution ${p}_{\theta,\boldsymbol{\mu}}$ are the coefficients $\boldsymbol{\mu}=\{\mu_k\}_{k=1}^K$ in the energy function $E_{\boldsymbol{\mu}}(\boldsymbol{x})$, whose optimal values $\muopt$ can be estimated by first calculating $\hat{\boldsymbol{F}}=\mathbb{E}_{{\boldsymbol{x}}\sim{p}_{\theta,\boldsymbol{\mu}}}[\boldsymbol{f}({\boldsymbol{x}})]$ and then approximating the target expectation $\boldsymbol{F}=\mathbb{E}_{{\boldsymbol{x}}\sim{p}_d}[\boldsymbol{f}({\boldsymbol{x}})]$ to satisfy the constraints with iterative gradient updates. Note that this procedure is done on a small development set once for all before the inference stage.

\begin{wrapfigure}{r}{0.5\textwidth}
\vspace{-22pt}
\begin{minipage}{0.5\textwidth}
\begin{algorithm}[H]
\caption{$\muopt$ estimation with WIS}
\label{alg:mu_estimation}
\begin{algorithmic}[1]
\Require $p_\theta$, $\boldsymbol{F}$, learning rate $\alpha$
\Ensure $\muopt$
\State Initialize $\boldsymbol{\mu}$ randomly
\State Sample trajectories $\{\hat{\boldsymbol{x}}^i\}_{i=1}^N\sim p_\theta$
\Repeat
    \State $\hat{\boldsymbol{F}} \leftarrow \frac{\sum_{i=1}^N  \exp({-E_{\boldsymbol{\mu}}(\hat{\boldsymbol{x}}^i)})\boldsymbol{f}(\hat{\boldsymbol{x}}^i)}{\sum_{i=1}^N \exp({-E_{\boldsymbol{\mu}}(\hat{\boldsymbol{x}}^i)})}$
    \State $\boldsymbol{\mu}\leftarrow\boldsymbol{\mu}-\alpha \nabla_{\boldsymbol{\mu}}\sqrt{\frac{1}{K}\|1 - \hat{\boldsymbol{F}}/\boldsymbol{F}\|_2^2}$
\Until{convergence}
\State $\muopt \leftarrow\boldsymbol{\mu}$
\end{algorithmic}
\end{algorithm}
\end{minipage}
\vspace{-17pt}
\end{wrapfigure}

First, $\hat{\boldsymbol{F}}$ can be estimated by Weighted Importance Sampling (WIS)~\citep{geweke1989bayesian, hesterberg1995weighted} 
which first obtains $N$ i.i.d. trajectories $\{\hat{\boldsymbol{x}}^i\}_{i=1}^N\sim p_\theta$, and then computes the weighted sum of $\boldsymbol{f}(\hat{\boldsymbol{x}}^i)$ with importance weight proportional to $\exp(-E_{\boldsymbol{\mu}}(\hat{\boldsymbol{x}}^i))$ normalized over all trajectories. As the asymptotic bias and variance of $\hat{\boldsymbol{F}}$ estimated by WIS are both proportional to $N^{-1}$~\citep{hesterberg1995weighted}, the target expectation can be approximated with required estimation error by drawing enough samples from the proposal. Detailed derivation of WIS is provided in Appendix \ref{appendix:WIS_derivation}. 

Next, given $\hat{\boldsymbol{F}}$ as a parametric function of the variable $\boldsymbol{\mu}$, we propose to approximate the target expectation $\boldsymbol{F}$ by minimizing the Root Mean Squared Relative Error~\citep{Shcherbakov2013ASO}, $\sqrt{\frac{1}{K}\|1 - \hat{\boldsymbol{F}} / \boldsymbol{F}\|_2^2}$ where the estimation error of each $f_k$ is normalized to the same scale. 
Then the optimal coefficient $\muopt$ is obtained by iteratively updating $\boldsymbol{\mu}$ until convergence, i.e., reaching a desired error level. The algorithm of coefficients estimation is shown in Algorithm \ref{alg:mu_estimation}. 
\update{
We also analyze the convergence of $\boldsymbol{\mu}$ in Appendix \ref{appendix:convergence_mu} and find it insensitive to initialization. 
Runtime analysis of Algorithm \ref{alg:mu_estimation} is provided in Appendix \ref{appendix:runtime_alg1}, which demonstrates the advantage over the typical hyper-parameter search procedure for most other decoding methods~\citep{typical,contrastive_decoding}.}

\subsubsection{Conditional Sampling from EBM}

\label{sec:sampling}

Sampling from the decoding distribution defined by EBM in \eqref{equ:residual_energy} is non-trivial, given that it is globally normalized over the whole sequence space. We first present the conditional probability of sampling a continuation $\boldsymbol{x}_{>t_0}$ from ${p}_{\theta,\boldsymbol{\mu}}$ given a prefix $\boldsymbol{x}_{\le t_0}$: 
\begin{equation}
    \label{equ:conditional}
    {p}_{\theta,\boldsymbol{\mu}}(\boldsymbol{x}_{>t_0}|\boldsymbol{x}_{\le t_0}) = p_\theta(\boldsymbol{x}_{>t_0}|\boldsymbol{x}_{\le t_0})\exp\Big[-E_{\boldsymbol{\mu}}(\boldsymbol{x}_{\le t_0}, \boldsymbol{x}_{>t_0})\Big] / Z(\boldsymbol{x}_{\le t_0}),
\end{equation}
where $Z(\boldsymbol{x}_{\le t_0})=\mathbb{E}_{{\boldsymbol{x}}'_{>t_0}\sim p_\theta(\cdot|\boldsymbol{x}_{\le t_0})}[\exp (-E_{\boldsymbol{\mu}}(\boldsymbol{x}_{\le t_0}, \boldsymbol{x}'_{>t_0}))]$ is the marginalization over future tokens 
sampled from the conditional proposal 
given the prefix. The detailed derivation is provided in Appendix \ref{appendix:conditional}. As direct sampling from this auto-regressive factorization is computationally prohibitive~\citep{residual}, we instead turn to a particle-based approximation of ${p}_{\theta,\boldsymbol{\mu}}$ using the sampling-importance-resampling (SIR) technique~\citep{db1988SIR,smith1992bayesian}.

\begin{wrapfigure}{r}{0.55\textwidth}
\vspace{-17pt}
\begin{minipage}{0.55\textwidth}
\begin{algorithm}[H]
\caption{Conditional Sampling with SIR}
\label{alg:sampling_sir}
\begin{algorithmic}[1]
\Require $p_{\theta}$, $E_{\boldsymbol{\mu}}$, prefix $\boldsymbol{x}_{\le t_0}$, $M$, $\tau$
\Ensure continuation $\boldsymbol{x}_{>t_0}$
\For{$i\leftarrow1$ to $M$} \Comment{In parallel}
    \State Sample $\hat{\boldsymbol{x}}^i_{>t_0}\sim p_\theta^{\tau}(\cdot|\boldsymbol{x}_{\le t_0})$
    \State Compute $w_i\leftarrow \exp(-E_{\boldsymbol{\mu}}(\boldsymbol{x}_{\le t_0}, \hat{\boldsymbol{x}}^i_{>t_0}))$
\EndFor
\State Sample $j\sim \textrm{Categorical}\Big(\frac{w_1}{\sum_{i=1}^M w_i},\cdots, \frac{w_M}{\sum_{i=1}^M w_i}\Big)$
\State Set $\boldsymbol{x}_{>t_0}\leftarrow\hat{\boldsymbol{x}}^j_{>t_0}$
\end{algorithmic}
\end{algorithm}
\end{minipage}
\vspace{-15pt}
\end{wrapfigure}

Specifically, we first leverage the given LM $p_\theta$ as a proposal to generate a set of $M$ plausible continuation candidates $\{\hat{\boldsymbol{x}}^i_{> t_0}\}_{i=1}^M$ given the prefix $\boldsymbol{x}_{\le t_0}$ in parallel. Then the final generation result is resampled from the distribution defined by the importance weight which is proportional to $\exp(-E_{\boldsymbol{\mu}}(\boldsymbol{x}^i_{\le t_0}, \hat{\boldsymbol{x}}^i_{> t_0}))$ normalized over all candidates $\{\hat{\boldsymbol{x}}^i\}_{i=1}^M$.
 We present the SIR approximation procedure of the conditional probability $\hat{p}^M_{\theta,\boldsymbol{\mu}}(\cdot|\boldsymbol{x}_{\le t_0})$ 
in Appendix \ref{appendix:SIR_approx}. In the limit of $M\rightarrow\infty$, the empirical distribution $\hat{p}_{\theta,\boldsymbol{\mu}}^{M}(\cdot|\boldsymbol{x}_{\le t_0})$ induced by SIR recovers the exact conditional distribution ${p}_{\theta,\boldsymbol{\mu}}(\cdot|\boldsymbol{x}_{\le t_0})$ for arbitrary $\boldsymbol{x}_{> t_0}$. \citet{skare2003improved_sir} proved that the point-wise relative error of the empirical distribution induced by SIR is $O(M^{-1})$ (see Theorem 2.1 in the original paper). In practice where $M$ is finite, we propose to sample from the the temperature modulated proposal $p_\theta^\tau$ with lower temperature $\tau$ to increase the chance of obtaining high-quality candidates within a realistic computational budget. The algorithm of conditional sampling is shown in Algorithm \ref{alg:sampling_sir}.
\update{In fact, various existing sampling methods can be used for candidate sampling, we choose to use temperature sampling as it preserves the feasibility to compute perplexity (see Appendix \ref{appendix:perplexity}). We provide complexity and runtime analysis of Algorithm \ref{alg:sampling_sir} and baseline decoding methods in Appendix \ref{appendix:runtime_alg2}.}









\section{Experiment}


\subsection{Datasets}

We evaluate our method on the Wikipedia and News domain for open-ended text generation. For the Wikipedia domain, the data comes from documents in the Wikitext-103 corpus~\citep{wikitext103}. For the News domain, the data comes from news articles in Wikinews\footnote{\url{http://www.wikinews.org}.}. 
We follow the data pre-processing procedure suggested by \citet{contrastive_decoding}, and randomly select 512 samples as the development set for hyper-parameter tuning for all decoding methods. The data statistics of each domain and detailed data pre-processing steps are provided in Appendix \ref{appendix:exp_details}. 

\subsection{Evaluation Metric Settings}
\label{sec:evaluation_metrics}

In this section, we introduce the set of evaluation metrics we consider in aligning with human texts, which correspond to $\boldsymbol{f}$ in \eqref{optim_problem:min}.
These metrics cover a wide range of aspects including repetition, coherence, diversity, and information content.

\textbf{Repetition.} We evaluate repetition at both sequence level and token level. The sequence-level metric measures the portion of duplicate $n$-grams in the generated texts~\citep{rep}: $\textsc{seq-rep-n}=100\times(1 - \frac{|\textrm{unique $n$-grams}(\hat{\boldsymbol{x}})|}{|\textrm{total $n$-grams}(\hat{\boldsymbol{x}})|})$ where $\hat{\boldsymbol{x}}$ is the generated text (\textsc{sr-n} in short). The token-level metric measures the average frequency of each generated token reoccuring in the previous $l$ tokens~\citep{Fu_theoretical,discodvt}: $\textsc{tok-rep-l}=100\times(\frac{1}{|\hat{\boldsymbol{x}}|} \sum_{t=1}^{|\hat{\boldsymbol{x}}|}\mathbbm{1}[\hat{x}_t\in \hat{\boldsymbol{x}}_{t-l-1:t-1}])$ (\textsc{tr-l} in short). We adopt \textsc{sr-n} with $n=\{2,3,4\}$ and \textsc{tr-l} with $l=\{8,16,32\}$, respectively.


\textbf{Coherence.} We evaluate coherence following \citet{contrastive_search} by calculating the cosine similarity between the sentence embedding of the prefix ${\boldsymbol{x}}_{\le t_0}$ and the generated continuation $\hat{\boldsymbol{x}}_{> t_0}$: $\textsc{coh}=100\times\textrm{cos}(\textrm{emb}({\boldsymbol{x}}_{\le t_0}), \textrm{emb}(\hat{\boldsymbol{x}}_{> t_0}))$ where $\textrm{emb}(\cdot)$ is parametrized by the pre-trained sentence embedding model SimCSE~\citep{simcse} based on RoBERTa~\citep{roberta}.

\textbf{Diversity.} We evaluate diversity following \citet{contrastive_decoding} by aggregating the $n$-gram repetition rate of $n=\{2,3,4\}$: $\textsc{div}=100\times\prod_{\textsc{n}=2}^4(1-\textsc{seq-rep-n})$. $\textsc{div}$ reflects the overall lexical diversity of the text at different levels of granularity.


\textbf{Information Content.\footnote{\url{https://en.wikipedia.org/wiki/Information_content}.}} 
We evaluate the average amount of information contained per word given the preceding contexts, by calculating the exponential of the entropy rate on the generated text $\hat{\boldsymbol{x}}$ using a language model: $e^\textsc{ent}=\exp(-\frac{1}{|\hat{\boldsymbol{x}}|}\sum_{t=1}^T \log p_{\textrm{LM}}(\hat{x}_t|\hat{\boldsymbol{x}}_{<t}))$~\citep{Shannon1951PredictionAE,calibration_entropy}. A low $e^\textsc{ent}$ suggests that the information in the text is redundant, while a high $e^\textsc{ent}$ indicates high surprisal in the text according to the language model. In our experiment, we use a general-domain language model GPT-2 XL to calculate the log probability of the generated texts.



Note that these metrics are also used in the automatic evaluation part of our experiment (\S\ref{sec:mainresult}) to measure how well the generated texts align with human texts in different aspects. 
Thus, the criterion for these evaluation metrics is the \textit{closeness} between the metric scores of generated texts and those of human references. 

In addition, we also report \textbf{MAUVE} score~\citep{mauve} (\textsc{mau} in short)
which measures the distributional similarity between the set of generated texts and that of references by calculating the area under the divergence frontier of the two empirical distributions. As this metric cannot provide an evaluation score that corresponds to a certain aspect for each text sample,
we only adopt it to assess the final performance of different decoding methods. 
We use GPT-2 XL to extract features from texts which was restricted to a maximum length of 256.

\subsection{Baselines and Implementation Details}

We thoroughly compare \name{} with various sampling-based and search-based methods in particular. We consider three canonical sampling-based methods: \textbf{Top-k} sampling~\citep{topk}, \textbf{Nucleus} sampling~\citep{holtzman2020topp} and \textbf{Typical} decoding~\citep{typical}. For search-based methods, besides vanilla \textbf{Greedy} decoding, we also consider two recent methods that maximize the contrastive objectives: Contrastive Decoding (\textbf{CD})~\citep{contrastive_decoding} and Contrastive Search (\textbf{CS})~\citep{contrastive_search}. 



To demonstrate the effectiveness of our method across different language model families and scales, we consider GPT-2 XL (1.5B) \citep{radford2019gpt2} and OPT-6.7B \citep{opt} as the base models for all decoding methods. For baselines, we follow the hyper-parameter settings in the original papers which are shown to work well in general. 
For \name{} in the main results, we use the nine metrics (described in \S{\ref{sec:evaluation_metrics}}) in the constraints. 
During sampling, we set the size of candidate set from the proposal model $M=25$ as it balances efficiency and performance. We set $\tau=0.97$ for the Wikipedia domain and $\tau=0.99$ for the News domain. We leave more implementation details of the baselines and \name{} in Appendix \ref{appendix:implementation_details}.


\subsection{Human Evaluation}

We further conduct human evaluation to assess the quality of generated texts. We consider three widely used criteria for open-ended generation: \textit{Fluency}, \textit{Coherence}, and \textit{Informativeness} \citep{best_practice_eval}. Specifically, \textit{Fluency} is characterized by the grammatical correctness and naturalness of the text without repetition; \textit{Coherence} is characterized by topic maintenance with the input prefix and well-structured discourse; \textit{Informativeness} is characterized by the adequacy of elaborating engaging details in a coherent plot. We randomly sample 100 prefixes and conduct pair-wise comparisons between \name{} and baselines. 
Three annotators on Amazon Mechanical Turk are hired to choose the better continuation (i.e., win, lose, or tie) from the ones generated by our method and the baselines in terms of the three criteria above. More detailed settings of human evaluation are provided in Appendix \ref{appendix:human_evaluation}.

\subsection{Main Results}
\label{sec:mainresult}



\begin{table}[t!]
    \centering
    \footnotesize
    \relscale{0.828}
    \begin{tabular}{p{0.0005\textwidth}p{0.07\textwidth}|cccccc|cccccc}
    \toprule
     \multicolumn{2}{c}{\multirow{2}{*}{Method}} & \multicolumn{6}{c}{Wikipedia\hspace{-1pt}} & \multicolumn{6}{c}{News}\\
     && \textsc{sr-4} & \textsc{tr-32} & \textsc{coh} & \textsc{div} & $e^{\textsc{ent}}$ & \textsc{mau} & \textsc{sr-4} & \textsc{tr-32} & \textsc{coh} & \textsc{div} & $e^{\textsc{ent}}$ & \textsc{mau}\\
    \midrule
    
     & Reference & 0.48 & 21.3 & 62.3 & 92.5 & 23.2 & - & 0.29 & 18.7 & 66.6	& 94.1 & 13.8 & -\\
    \midrule
    \multirow{7}{*}{\rotatebox[origin=c]{90}{GPT-2 XL}} & Greedy & 60.9 & 65.5 & 60.2 & 8.03 & 2.29 & 59.7 & 53.2 & 58.2 & 63.8 & 13.2 & 2.19 & 65.2\\
    &Top-k & 2.11 & 23.4 & \underline{60.9} & 87.8 & 10.1 & 77.8 & 0.95 & 20.3 & \underline{64.7} & 91.7 & 8.17 & \underline{96.3} \\ 
    &Nucleus & 1.19 & \underline{20.0} & 57.3 & \textbf{92.4} & 17.3 & 78.3 & 0.80 & 18.7 & 60.8 & 93.5 & \underline{11.0} & 95.3 \\ 
    &Typical & \underline{0.81} & 17.4 & 54.9 & 94.5 & \underline{30.1} & 78.7 & \underline{0.42} & 16.9 & 57.2 & 95.3 & 18.2 & 95.0 \\
    &CD & 1.31 & 28.2 & 68.7 & 85.9 & 7.55 & 77.8 & 0.63 & 23.2 & 71.2 & 90.5 & 6.55 & 95.1\\
    & CS & 1.78 & 23.0 & 56.9 & 90.6 & 5.25 & \underline{83.3} & 0.77 & \underline{19.2} & 63.6 & \textbf{94.1} & 4.18 & 95.7 \\
    &\name & \textbf{0.42} & \textbf{22.5} & \textbf{62.5} & \underline{92.2} & \textbf{22.8} & 				\textbf{88.1} & 
    \textbf{0.18} & \textbf{18.7} & \textbf{66.3} & \underline{94.5} & \textbf{13.7} & \textbf{97.4} \\
    \midrule[0.5pt]
    
    \multirow{7}{*}{\rotatebox[origin=c]{90}{OPT-6.7B}} &Greedy & 54.8	& 60.4 & \underline{62.0}	& 0.12 & 2.78 & 64.8 & 45.2 & 51.0 & 63.6 & 0.22 & 2.72 & 70.7\\
    &Top-k & 2.44 & 24.1 & 61.3 & 86.6 & 13.9 & 77.5 & 1.53 & 19.9 & \textbf{65.7} & 90.5 & 10.7 & \underline{95.7}\\ 
    &Nucleus & 2.33 & {21.9} & 59.1 & 88.6 & \underline{18.9} & {80.1} & 1.37 & {19.2} & 63.3 & 91.5 & \underline{12.2} & 95.3\\ 
    &Typical & \underline{1.06} & 19.6 & 57.0 & \underline{92.9} & 31.9 & 77.7 & \underline{0.95} & 17.7 & 59.4 & \textbf{93.7} & 19.4 & 95.2 \\
    &CD & 2.90 & 26.5 & 68.6 & 82.3 & 11.7 & 78.6 & 1.93 & 21.0 & 71.7 & 87.5 & 9.20 & 95.2\\
    &CS & 1.13 & \underline{21.7} & 57.7 & 91.8 & 8.72 & \underline{83.3} & 1.18 & \underline{18.3} & 62.9 & 93.2 & 6.69 & 94.0\\
    &\name & \textbf{0.38} & \textbf{21.6} & \textbf{62.3} & \textbf{92.6} & \textbf{22.7} & \textbf{90.7} & \textbf{0.25} & \textbf{18.8} & \underline{64.8} & \underline{94.9} & \textbf{13.6} & \textbf{97.2}\\
    \bottomrule
    \end{tabular}
    \caption{Main results of automatic evaluation on the Wikipedia and News domain using GPT-2 XL and OPT-6.7B. For all metrics, the best scores are the \textit{closest} to the human scores except for \textsc{mau}, which is better when \textit{higher}. The best score is in \textbf{boldface} and the second best is \underline{underlined}.}
    \label{tab:main_res}
    \vspace{-5pt}
\end{table}

\textbf{Automatic Evaluation.} We first present the main results obtained via automatic evaluation in Table \ref{tab:main_res}, where we benchmark our method against strong baselines across different domains and model scales under the evaluation metrics described in \S{\ref{sec:evaluation_metrics}}.
Due to the space limit, we only present representative metrics and leave the full results in Appendix \ref{appendix:full_res}.
\name{} outperforms all baselines in aligning with human texts on four aspects including repetition, coherence, diversity, and information content, and it also achieves the highest MAUVE score across different domains (Wikipedia / News) and model scales (1.5B / 6.7B). The consistent performance improvement demonstrates the effectiveness and generalizability of \name{}. Notably, \name{} achieves the lowest sequence-level repetition (indicated by \textsc{sr-4}) across all settings and maintains human-level coherence and high MAUVE score at the meantime, compared with baselines that have a similar repetition level, e.g., the Typical decoding method. The results indicate that \name{} is the only method that effectively aligns multiple aspects with human texts without hard trade-off among those aspects.

For sampling-based methods, Nucleus sampling generally preserves more diverse candidates than Top-k sampling and achieves better diversity at human-level at the cost of low coherence. Typical decoding over-emphasizes the long tail of the distribution, which produces texts with highly diverse lexicality (highest diversity) but severe topic shift (lowest coherence). On the other hand, search-based methods generally have substantially lower information score than sampling-based methods, which indicate 
the redundancy of information in the generated texts. 
Specifically, CD achieves the highest coherence score at the expense of having the worst diversity except for greedy decoding. By analyzing its generated texts, we found that CD tends to repeat tokens from the previous context (indicated by high \textsc{tr-32}), which could be a bias captured by SimCSE to compute representations with high similarity. Additionally, we found that the MAUVE score of CS is generally higher than CD when the evaluation length is set to 256, which contradicts the previous result~\citep{contrastive_decoding} where the length is truncated to 128. Our result is consistent with our observation that CD generates semantically more repetitive texts its longer generations. 
\update{
To demonstrate the universability of our method, we additionally conduct an experiment on a text summarization dataset in Appendix \ref{appendix:summary}.
}


\begin{table}[!t]
\begin{minipage}[b]{0.45\textwidth} 
    \centering
    \footnotesize
    \begin{tabular}{l|cc|cc}
    \toprule
    \multirow{2}{*}{Model} & \multicolumn{2}{c}{Wikipedia} & \multicolumn{2}{c}{News}\\
    & \texttt{ori} & \texttt{imp} & \texttt{ori} & \texttt{imp} \\
    \midrule
    GPT-2 XL & 23.1 & \textbf{22.0} & 13.9 & \textbf{13.1}\\
    OPT-6.7B & 16.4 & \textbf{16.2} & 10.8 & \textbf{10.2} \\
    \bottomrule
    \end{tabular}
    \caption{Perplexity evaluation results. ``\texttt{ori}'' is the original perplexity of the LM distribution. ``\texttt{imp}'' is the improved perplexity of the optimal decoding distribution.}
    \label{tab:perplexity}
\end{minipage}\hfill  
\begin{minipage}[b]{0.5\textwidth}
    \centering
    \scriptsize
    \relscale{1.2}
    \begin{tabular}{p{0.1\textwidth}|ll|ll|ll}
    \toprule
    \multirow{2}{*}{Ours vs.} & \multicolumn{2}{c}{Fluency} & \multicolumn{2}{c}{Coherence} & \multicolumn{2}{c}{Informativeness}\\
    & \multicolumn{1}{c}{Win} & \multicolumn{1}{c}{Lose} & \multicolumn{1}{c}{Win} & \multicolumn{1}{c}{Lose} & \multicolumn{1}{c}{Win} & \multicolumn{1}{c}{Lose} \\
    \midrule
    CD & \textbf{0.54} &  0.35 & \textbf{0.48}$^{*}$  & 0.36 & \textbf{0.48}$^{*}$ & 0.27\\
    CS &  \textbf{0.53}$^{*}$ & 0.34 & \textbf{0.47}$^{*}$ & 0.29 & \textbf{0.41} & 0.33\\
    Nucleus & \textbf{0.54}$^*$ & 0.33 & \textbf{0.66}$^*$ & 0.15 & \textbf{0.45}$^*$ & 0.30 \\
    Typical & \textbf{0.53}$^*$ & 0.30 & \textbf{0.62}$^*$ & 0.19 & \textbf{0.44}$^*$ & 0.23 \\
    \bottomrule
    \end{tabular}
    \caption{Human evaluation results 
    that compare our method with baselines on Wikipedia domain. 
    $^*$ indicates a significance with p-value $<0.005$.}
    \label{tab:human}
\end{minipage}
\vspace{-10pt}
\end{table}

\textbf{Perplexity Evaluation.} We calculate the perplexity of \name{}'s decoding distribution, which can be directly computed using a proposal model with temperature modulation. 
The derivation of perplexity calculation is presented in Appendix \ref{appendix:perplexity}. From the results in Table \ref{tab:perplexity}, we observe that the perplexity improvement over the original LM distribution is consistent across model sizes and data domains. Notably, the perplexity improvement is more obvious for the model with a smaller size (GPT-2 XL), which manifests the advantage of our approach in enhancing the capacity of the autoregressive LM in modeling language with a constrained computational budget.

\textbf{Human Evaluation.} We conduct pairwise human evaluation and selectively compare our method against strong baselines including Contrastive Decoding (CD), Contrastive Search (CS), Nucleus sampling, and Typical decoding. As shown in Table \ref{tab:human}, \name{} is more preferred than all four baselines in fluency, coherence, and informativeness respectively on the domain of Wikipedia according to the human judgment. Specifically, \name{} generates significantly more coherent texts compared to all the baselines. We provide a list of qualitative cases produced by different decoding methods in Appendix \ref{appendix:cases} to help readers comprehend the difference in their generation quality.

\subsection{Ablation Study}
\label{sec:ablation}

\textbf{Ablating Metrics in Constraints.}
We ablate the metrics in the constraints of \name\xspace to investigate their individual contributions to the overall alignment performance. In Table \ref{tab:metrics_ablation}, we present the results of ablating different metrics while keeping other settings verbatim. The ablation experiments are obtained using GPT-2 XL on the Wikipedia domain. 
We first observe that for most metrics, removing them lead to drastic deterioration in the corresponding metric scores, e.g., \textsc{sr-4} ($0.42\rightarrow3.57$), \textsc{coh} ($62.5\rightarrow57.6$), $e^\textsc{ent}$ ($22.8\rightarrow19.9$). We also observe clear inter-dependence between certain metrics pairs, as removing one leads to notable performance drop in another, e.g., \textsc{sr-n} and \textsc{div}, $e^{\textsc{ent}}$ and \textsc{tr-l}, which reflects the intrinsic correlation among the evaluated aspects.


\begin{table}[t]
    \begin{minipage}[b]{0.46\textwidth}
        \centering
        \scriptsize
        \begin{tabular}{l|ccccc}
        \toprule
        Metrics & \textsc{sr-4} & \textsc{tr-32} & \textsc{coh} & \textsc{div} & $e^{\textsc{ent}}$ \\
        \midrule
        Reference & 0.48 & 21.3 & 62.3 & 92.5 & 23.2 \\
        \midrule
        \name & 0.42 & 22.5 & 62.5 & 92.2 & 22.8 \\
          \rowcolor{Gray}
          w/o \textsc{sr-n} & \textcolor{blue}{{\textbf{3.57}}} & 22.6 & 63.0 & \textcolor{blue}{\textbf{86.8}} & 22.1 \\
          \rowcolor{Gray}
          w/o \textsc{tr-l} & {0.19} & {20.2} & 62.0 & 93.9 & 25.1 \\
          \rowcolor{Gray}
          w/o \textsc{coh} & 0.37 & 22.3 & \textcolor{blue}{{\textbf{57.6}}} & 92.6 & 23.8 \\
          \rowcolor{Gray}
          w/o \textsc{div} & 0.30 & 22.1 & 62.4 & {93.0} & 23.3 \\
          \rowcolor{Gray}
          w/o $e^{\textsc{ent}}$ & 0.55 & \textcolor{blue}{\textbf{23.0}} & 63.1 & 91.5 & \textcolor{blue}{{\textbf{19.9}}} \\
        \bottomrule
        \end{tabular}
        \caption{Ablation results of 
        different metrics
        where rows in gray are results with corresponding metrics removed. ``\textsc{sr-n}'' and ``\textsc{tr-l}'' denote the set of metrics with $n=2,3,4$ and $l=8,16,32$ respectively.}
        \label{tab:metrics_ablation}
    \vspace{-3pt}
    \end{minipage}
    \hfill
    \begin{minipage}[b]{0.47\textwidth}
    \centering
    \includegraphics[width=\textwidth]{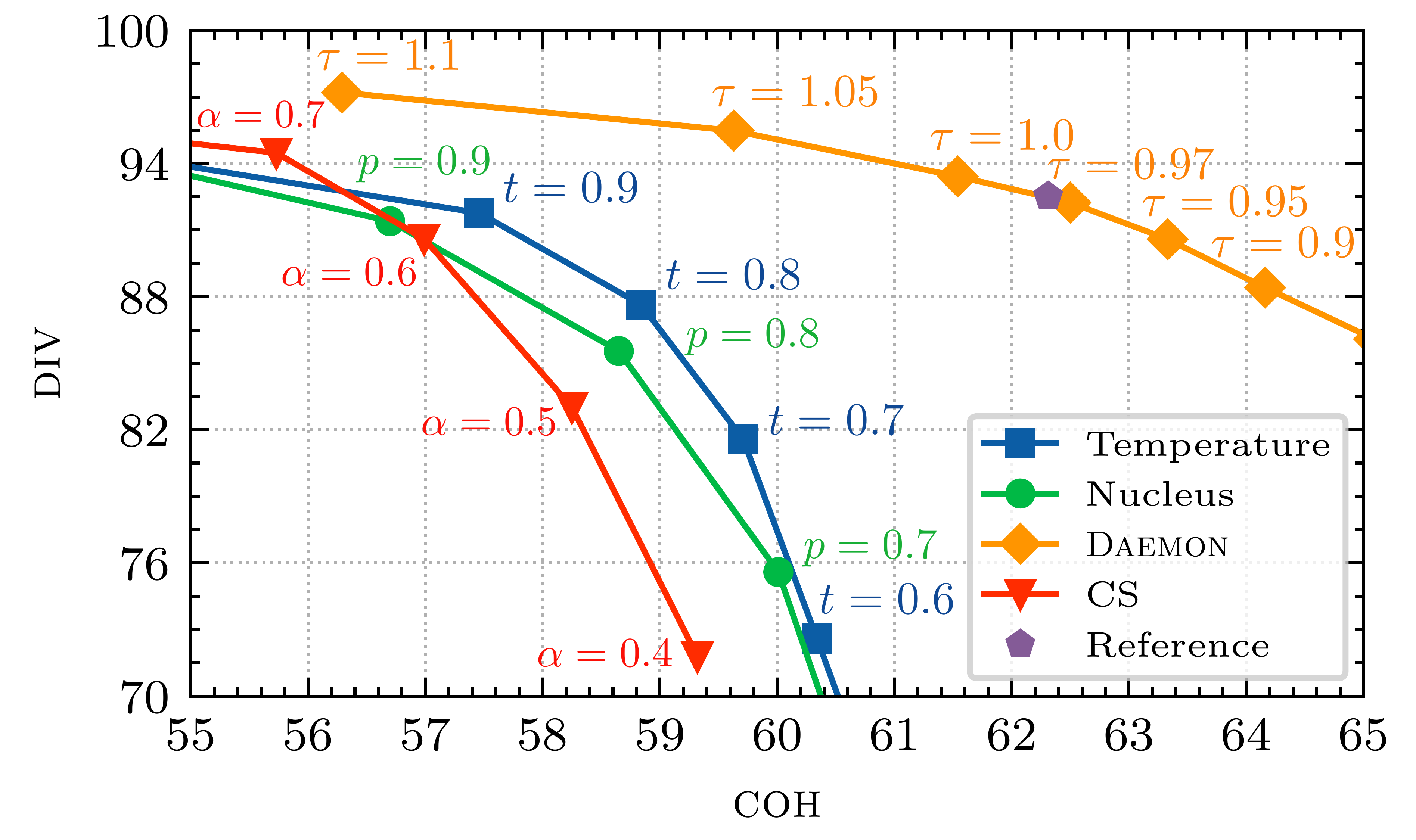}
    \vspace{-20pt}
    \captionof{figure}{\textsc{coh} versus \textsc{div} when tuning the temperature $\tau$ of the proposal model of \name\xspace and hyper-parameters of other baselines.}
    \label{fig:temp_ablation}
    \vspace{-3pt}
\end{minipage}
\end{table}

\begin{figure}[t]
    \centering
    \includegraphics[width=\textwidth]{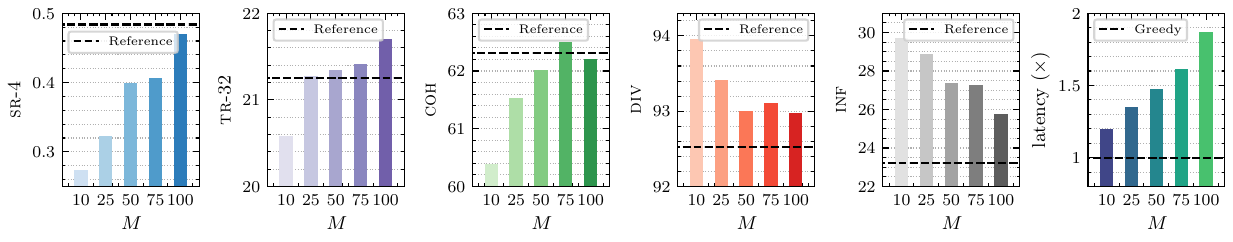}
    \caption{Ablation results of varying the number of candidates for resampling ($M$). Results on the five metrics are compared with the reference and the latency is relative to Greedy decoding.}
    \label{fig:resamp_ablation}
    \vspace{-10pt}
\end{figure}

\textbf{Number of Candidates for Resampling.} We then study the impact of the number of candidates for resampling ($M$ described in \S{\ref{sec:sampling}}). In Figure \ref{fig:resamp_ablation}, we present the results on five metrics and the relative decoding latency 
with a batch size of 1. We set the temperature of the proposal distribution to $1.0$ to isolate the impact of $M$. From the results, we observe that the alignment on all metrics generally improves with a larger $M$, which indicates better SIR approximation to the optimal decoding distribution. Specifically, the convergence rate of different metric varies, e.g., \textsc{div} and $e^{\textsc{ent}}$ converge slower than \textsc{tr-32} and \textsc{coh} with the increase of $M$. Finally, increasing $M$ inevitably incurs higher decoding latency, and thus we choose $M=25$ with slightly lower temperature in the main result to maintain both efficiency and performance. 
\update{
We test the robustness of our method by investigating the performance of different metrics when we optimize a single metric in Appendix \ref{appendix:robust}. 
}

\textbf{Temperature When Sampling from the Proposal Model.} As suggested by \citep{gan_falls,tradeoff_div_quality}, quality and diversity are two important aspects which can be traded off by sweeping hyper-parameters (e.g., temperature) to alter the sharpness of the distribution. For \name, we tune the temperature of the proposal model ($\tau$ described in \S{\ref{sec:sampling}}) with other settings unchanged and plot the curve in the dimensions of coherence and diversity in Figure \ref{fig:temp_ablation}. We also plot the result of tuning the hyper-parameters of different baseline methods. 
We first observe that \name\xspace dominates the compared baselines in terms of coherence for all interested diversity level. Second, \name\xspace is able to achieve human-level performance on these two aspects by slightly tuning the temperature lower, which demonstrates the effectiveness and practicality of our approach. 

\section{Conclusion and Future Work}

In this study, we introduce Decoding as Direct Metrics Optimization (\name{}), a decoding framework that explicitly aligns the generations with human texts under various aspects, e.g., coherence, repetition, and etc. The induced sampling distribution harnesses candidates generated by an auto-regressive LM, which are re-weighted according to a sequence-level energy function. We demonstrate both theoretical and empirical benefits of \name{}, which outperforms strong decoding baselines in both human evaluation and automatic evaluation in terms of metrics alignment to human texts, perplexity improvement over the original LM, and superior quality-diversity trade-off.

As for the future work of \name{}, we consider exploring directions to generalize the framework, e.g., extending the equality constraints to more general constraint types, such as inequalities and structural equations. It is also necessary to consider more aspects along with evaluation metrics beyond text quality, e.g., human value. This can therefore complement with other training-time alignment methods, such as RLHF. And finally more efficient method to sample from the distribution defined by the EBM is also important to ensure the practicality of \name{}. Overall, we firmly believe this work paves the way for advanced methods that guide the language model towards desired behavior by incorporating constraints that capture intended regularities. 




\section*{Acknowledgments}

This work was supported by the National Science Foundation for Distinguished Young Scholars (with No. 62125604), the NSFC projects (with No. 62306160 and  No. 61936010),
the China National Postdoctoral Program for Innovative Talents (No. BX20230194), and the China Postdoctoral Science Foundation (No. 2023M731952).

\bibliography{iclr2024_conference}
\bibliographystyle{iclr2024_conference}

\newpage
\appendix
\section{Derivations and Proofs}
\label{appendix:proofs}

\subsection{Proof of Proposition \ref{prop:solution}}
\label{appendix:prop1}
\FirstProp*

\begin{proof}
    The optimization problem with full constraints is as follows:
    \begin{align}
        \underset{q}{\argmin} \ &D_{\textrm{KL}}(q\|p_\theta) \\
        s.t. \quad  \mathbb{E}_{\hat{\boldsymbol{x}}\sim q}[f_k(\hat{\boldsymbol{x}})] &= \mathbb{E}_{\boldsymbol{x}\sim p_d}[f_k(\boldsymbol{x})], \quad k\in\{1,\cdots,K\} \\
         \label{equ:no-negative}
         q(\boldsymbol{x}) &\ge 0, \quad \forall \boldsymbol{x}\in\mathcal{X} \\
         \label{equ:normalize}
         \sum_{\boldsymbol{x}\in\mathcal{X}} q(\boldsymbol{x}) &= 1,
    \end{align}
    where \eqref{equ:no-negative} and (\ref{equ:normalize}) are implicit constraints that guarantee $q$ to be a probability distribution over $\mathcal{X}$. The Lagrangian of the above optimization problem is:
    \begin{equation}
        L({q}, \boldsymbol{\mu}, \boldsymbol{\lambda}, \tau)= \sum_{\boldsymbol{x}\in\mathcal{X}} q(\boldsymbol{x})\log \frac{q(\boldsymbol{x})}{p_\theta(\boldsymbol{x})} + \boldsymbol{\mu}^\top  \Big[\sum_{\boldsymbol{x}\in\mathcal{X}}q(\boldsymbol{x}) \boldsymbol{f}(\boldsymbol{x}) - \boldsymbol{F}\Big] - \sum_{\boldsymbol{x}\in\mathcal{X}} q(\boldsymbol{x}) {\lambda}({\boldsymbol{x}} )+\tau \Big[ 1 - \sum_{\boldsymbol{x}\in\mathcal{X}} q(\boldsymbol{x})\Big],
    \end{equation}
    where $\boldsymbol{f},\boldsymbol{\mu}\in\mathbb{R}^{K}, \boldsymbol{\lambda}\in\mathbb{R}^{|\mathcal{X}|}, \boldsymbol{F}=\mathbb{E}_{\boldsymbol{x}\sim p_d}[\boldsymbol{f}_k(\boldsymbol{x})]$. 
    
    The optimal set of solutions $(q_{\textrm{opt}}, \boldsymbol{\mu}_{\textrm{opt}}, \boldsymbol{\lambda}_{\textrm{opt}}, \tau_{\textrm{opt}})$ satisfy the Karush-Kuhn-Tucker conditions:
    \begin{align}
        \label{equ:kkt_q}
        \log \frac{q_{\textrm{opt}}(\boldsymbol{x})}{p_\theta(\boldsymbol{x})} + 1 + \boldsymbol{\mu}_{\textrm{opt}}^\top \boldsymbol{f}(\boldsymbol{x}) - {\lambda}_{\textrm{opt}}(\boldsymbol{x}) - \tau_{\textrm{opt}}& =0, \quad \forall \boldsymbol{x}\in \mathcal{X} \\
        \label{equ:kkt_cons}
        \sum_{\boldsymbol{x}\in \mathcal{X}}q_{\textrm{opt}}(\boldsymbol{x})\boldsymbol{f}(\boldsymbol{x}) =\boldsymbol{F}, \quad \sum_{\boldsymbol{x}\in\mathcal{X}}q_{\textrm{opt}}(\boldsymbol{x})=1, \quad q_{\textrm{opt}}(\boldsymbol{x}) &\ge 0, \quad \forall \boldsymbol{x}\in \mathcal{X}  \\
        \label{equ:kkt_slack}
        \lambda_{\textrm{opt}}(\boldsymbol{x}) \ge 0, \quad \lambda_{\textrm{opt}}(\boldsymbol{x})q_{\textrm{opt}}(\boldsymbol{x})&=0, \quad \forall \boldsymbol{x}\in \mathcal{X}
    \end{align}
We first obtain the form of $q_{\textrm{opt}}$ from \eqref{equ:kkt_q}:
\begin{equation}
    q_{\textrm{opt}}(\boldsymbol{x})=p_\theta(\boldsymbol{x})\exp\Big[-\boldsymbol{\mu}_{\textrm{opt}}^\top \boldsymbol{f}(\boldsymbol{x}) + \lambda_{\textrm{opt}}(\boldsymbol{x})\Big] / Z,
\end{equation}
where $Z=\sum_{\boldsymbol{x}\in\mathcal{X}} p_\theta(\boldsymbol{x})\exp[-\boldsymbol{\mu}_{\textrm{opt}}^\top \boldsymbol{f}(\boldsymbol{x}) + \lambda_{\textrm{opt}}(\boldsymbol{x})]$ is the normalizing constant due to \eqref{equ:kkt_cons}.

For every $\boldsymbol{x}'$ such that $q_{\textrm{opt}}(\boldsymbol{x}') > 0$, from \eqref{equ:kkt_slack} we have $\lambda_{\textrm{opt}}(\boldsymbol{x}') = 0$, which thereby proves \eqref{equ:residual_energy}.

\end{proof}

\subsection{Proof of Proposition \ref{prop:improvement}}
\label{appendix:prop2}
\SecondProp*

\begin{proof}
Denote the set of probability densities that satisfy the constraints in the optimization problem (\ref{optim_problem:min}) as $\mathcal{C}$:
\begin{equation}
    \mathcal{C}=\{q\in\mathcal{P}: \mathbb{E}_{\boldsymbol{x}\sim q}[{f}_k(\boldsymbol{x})] = \mathbb{E}_{\boldsymbol{x}\sim p_d}[f_k(\boldsymbol{x})], \ \forall k \in [K] \}.
\end{equation}
Then we consider $p_\alpha = (1-\alpha){q}_{\textrm{opt}} + \alpha p_d \in \mathcal{C}$ for $\alpha \in [0,1]$, where ${q}_{\textrm{opt}}$ is the optimal distribution that solves the optimization problem. Due to the convexity of $\mathcal{C}$, $p_\alpha$ also belongs to $\mathcal{C}$. 

Next we consider the following derivative $\partial D_{\textrm{KL}}(p_\alpha\|p_\theta)/\partial \alpha$ at $\alpha=0$:
\begin{align}
    \frac{\partial}{\partial\alpha}D_{\textrm{KL}}(p_\alpha \| p_\theta)\Big|_{\alpha=0}
    &=\frac{\partial}{\partial\alpha}\Bigg(\sum_{\boldsymbol{x}}p_\alpha(\boldsymbol{x})\log \frac{p_\alpha(\boldsymbol{x})}{p_\theta(\boldsymbol{x})}\Bigg)\Bigg|_{\alpha=0}\\
    &=\Bigg(\sum_{\boldsymbol{x}}\frac{\partial p_\alpha(\boldsymbol{x})}{\partial\alpha}\log \frac{p_\alpha(\boldsymbol{x})}{p_\theta(\boldsymbol{x})} + \frac{\partial p_\alpha(\boldsymbol{x})}{\partial\alpha}\Bigg)\Bigg|_{\alpha=0} \\
    &=\Bigg[\sum_{\boldsymbol{x}} \Big(\log \frac{p_\alpha(\boldsymbol{x})}{p_\theta(\boldsymbol{x})} + 1\Big)(p_d(\boldsymbol{x}) - q_{\textrm{opt}}(\boldsymbol{x}))\Bigg]\Bigg|_{\alpha=0} \\
    &=\Bigg(\sum_{\boldsymbol{x}} (p_d(\boldsymbol{x}) - q_{\textrm{opt}}(\boldsymbol{x}))\log \frac{p_\alpha(\boldsymbol{x})}{p_\theta(\boldsymbol{x})}\Bigg)\Bigg|_{\alpha=0}\\
    \label{equ:claim_one_key}
    &=\sum_{\boldsymbol{x}} (p_d(\boldsymbol{x}) - q_{\textrm{opt}}(\boldsymbol{x}))\log \frac{q_{\textrm{opt}}(\boldsymbol{x})}{p_\theta(\boldsymbol{x})}\\
    &=\sum_{\boldsymbol{x}} - p_d(\boldsymbol{x}) \log{p_\theta(\boldsymbol{x})} + p_d(\boldsymbol{x})\log{q_{\textrm{opt}}(\boldsymbol{x})}- q_{\textrm{opt}}(\boldsymbol{x})\log \frac{q_{\textrm{opt}}(\boldsymbol{x})}{p_\theta(\boldsymbol{x})}\\
    &=H(p_d, p_\theta) - H(p_d, q_{\textrm{opt}}) - D_{\textrm{KL}}(q_{\textrm{opt}}\|p_\theta)
\end{align}

While $\partial D_{\textrm{KL}}(p_\alpha\|p_\theta)/\partial \alpha$ can also be written in the limit form:
\begin{equation}
    \label{equ:limit}
    \frac{\partial}{\partial\alpha} D_{\textrm{KL}}(p_\alpha \| p_\theta)\Big|_{\alpha=0}=\lim_{\alpha \rightarrow 0^+} \frac{D_{\textrm{KL}}(p_\alpha\|p_\theta) - D_{\textrm{KL}}({q}_{\textrm{opt}}\|p_\theta)}{\alpha}
\end{equation}

Due to the optimality of ${q}_{\textrm{opt}}$, $D_{\textrm{KL}}({q}_{\textrm{opt}}\|p_\theta) \le D_{\textrm{KL}}(p_\alpha\|p_\theta), \ \forall \alpha\in[0,1]$, which proves that $\partial D_{\textrm{KL}}(p_\alpha\|p_\theta)/\partial \alpha|_{\alpha=0} \ge 0$. Therefore, for \eqref{equ:claim_one_key} to be non-negative, we must have $q_{\textrm{opt}}(\boldsymbol{x})\ne 0$ for any $\boldsymbol{x}\in S(p_d)$ (otherwise it converges to $-\infty$), which proves the first claim, $S(q_{\textrm{opt}})\supseteq S(p_d)$.

Next, we show that there exists some $\alpha'<0$ such that $p_{\alpha'}$ is a probability density. For any $\boldsymbol{x}\in S(q_{\textrm{opt}})$:
\begin{align*}
    p_{\alpha'}(\boldsymbol{x})
    &=(1-\alpha')q_{\textrm{opt}}(\boldsymbol{x}) + \alpha'  p_d(\boldsymbol{x}) \\
    &= q_{\textrm{opt}}(\boldsymbol{x}) + \alpha'  [p_d(\boldsymbol{x}) - q_{\textrm{opt}}(\boldsymbol{x})].
\end{align*}

If $p_d(\boldsymbol{x}) - q_{\textrm{opt}}(\boldsymbol{x}) =0$, we always have $p_{\alpha'}(\boldsymbol{x}) =q_{\textrm{opt}}(\boldsymbol{x}) \ge 0$.

If $p_d(\boldsymbol{x}) - q_{\textrm{opt}}(\boldsymbol{x}) <0$, we always have $p_{\alpha'}(\boldsymbol{x})=q_{\textrm{opt}}(\boldsymbol{x}) - \alpha'  [q_{\textrm{opt}}(\boldsymbol{x}) - p_d(\boldsymbol{x})]>0$.

If $p_d(\boldsymbol{x}) - q_{\textrm{opt}}(\boldsymbol{x}) >0$, we set $\alpha'= -[{\sup_{\boldsymbol{x}'} \frac{p_d(\boldsymbol{x}')}{q_{\textrm{opt}}(\boldsymbol{x}')} - 1}]^{-1}$ as $p_d(\boldsymbol{x})/q_{\textrm{opt}}(\boldsymbol{x})$ is always greater than $1$. Since $\alpha'\ge -\frac{q_{\textrm{opt}}(\boldsymbol{x})}{p_d(\boldsymbol{x}) - q_{\textrm{opt}}(\boldsymbol{x})}$, we have $p_{\alpha'}(\boldsymbol{x})\ge 0$.

Therefore, we prove the non-negativity of $p_{\alpha'}$. Then it is trivial to show that $p_{\alpha'}\in\mathcal{C}$ by the linearity of expectation. Therefore we show that:
\begin{equation}
    \label{equ:limit_neg}
    \frac{\partial}{\partial\alpha} D_{\textrm{KL}}(p_\alpha \| p_\theta)\Big|_{\alpha=0}=\lim_{\alpha' \rightarrow 0^-} \frac{D_{\textrm{KL}}(p_{\alpha'}\|p_\theta) - D_{\textrm{KL}}({q}_{\textrm{opt}}\|p_\theta)}{\alpha'}\le 0.
\end{equation}

Combining \eqref{equ:limit} and (\ref{equ:limit_neg}), we arrive at $\partial D_{\textrm{KL}}(p_\alpha\|p_\theta)/\partial \alpha|_{\alpha=0}=0$, which proves the second claim, $H(p_d, q_{\textrm{opt}}) = H(p_d, p_\theta)- D_{\textrm{KL}}(q_{\textrm{opt}}\|p_\theta)$.
\end{proof}

\subsection{Perplexity of the Optimal Decoding Distribution}

\label{appendix:perplexity}

The perplexity of the decoding distribution ${p}_{\theta, \boldsymbol{\mu}}$ can be derived by taking the exponent of the cross-entropy between $p_d$ and ${p}_{\theta, \boldsymbol{\mu}}$. In practice, we evaluate on the test set $\{\boldsymbol{x}^{i}\}_{i=1}^N$ which forms an empirical distribution by drawing samples from $p_d$. We also modulate the proposal model by temperature $\tau$ (denoted as $p_\theta^{\tau}$) to make the result consistent with the inference stage.

We derive the cross-entropy $H(p_d, {p}_{\theta, \boldsymbol{\mu}})$ at the token-level by plugging in \eqref{equ:residual_energy}:
\begin{align*}
    H(p_d, {p}_{\theta, \boldsymbol{\mu}}) & = -\frac{1}{\sum_{i=1}^N|\boldsymbol{x}^i|}\sum_{i=1}^N \log p_{\theta,\boldsymbol{\mu}}(\boldsymbol{x}^i) \\
    &=-\frac{1}{\sum_{i=1}^N|\boldsymbol{x}^i|}\sum_{i=1}^N \log \Big(p_\theta^\tau(\boldsymbol{x}^i)e^{-E_{\boldsymbol{\mu}}(\boldsymbol{x}^i)}/Z\Big) \\
    &=-\frac{1}{\sum_{i=1}^N|\boldsymbol{x}^i|}\Bigg[\sum_{i=1}^N \Big(\log p_\theta^\tau(\boldsymbol{x}^i) - E_{\boldsymbol{\mu}}(\boldsymbol{x}^i)\Big) - \log \sum_{i=1}^N\Big({p}_\theta^\tau(\boldsymbol{x}^i)e^{-E_{\boldsymbol{\mu}}(\boldsymbol{x}^i)}\Big)\Bigg] \\
    &=\frac{1}{\sum_{i=1}^N|\boldsymbol{x}^i|}\Bigg[H(p_d, p_\theta^\tau) + \sum_{i=1}^NE_{\boldsymbol{\mu}}(\boldsymbol{x}^i) + \log \sum_{i=1}^N\Big({p}_\theta^\tau(\boldsymbol{x}^i)e^{-E_{\boldsymbol{\mu}}(\boldsymbol{x}^i)}\Big)\Bigg],
\end{align*}
where $H(p_d, p_\theta^\tau)$ is the cross-entropy of the proposal model with a temperature modulation:
\begin{align*}
     H(p_d, p_\theta^\tau) = -\frac{1}{\sum_{i=1}^N|\boldsymbol{x}^i|}\sum_{i=1}^N\sum_{t=1}^{|\boldsymbol{x}^i|} \frac{\exp[\boldsymbol{e}({x_t^i})^\top \boldsymbol{h}_t / \tau]}{\sum_{w\in V}\exp[\boldsymbol{e}({w})^\top \boldsymbol{h}_t/\tau]}.
\end{align*}
where $\boldsymbol{e}(w)$ is the output embedding of $w$ and $\boldsymbol{h}_t$ is the hidden state at time step $t$.

Finally, the perplexity of the decoding distribution $p_{\theta, \boldsymbol{\mu}}$ in \name{} is $e^{H(p_d,p_{\theta, \boldsymbol{\mu}})}$ in nats.

\subsection{Details of Coefficients Estimation with WIS}
\label{appendix:IS}

\subsubsection{WIS Derivation}
\label{appendix:WIS_derivation}

We derive the estimation of $\hat{\boldsymbol{F}}$ using Weighted Importance Sampling (WIS) with $N$ i.i.d. trajectories $\{\hat{\boldsymbol{x}}^i\}_{i=1}^N$ from the proposal $p_\theta$:
\begin{align*}
    \hat{\boldsymbol{F}} &= \mathbb{E}_{\boldsymbol{x}\sim p_{\theta,\boldsymbol{\mu}}}[\boldsymbol{f}(\boldsymbol{x})]\\
    &=\sum_{\boldsymbol{x}} p_{\theta,\boldsymbol{\mu}}(\boldsymbol{x})\boldsymbol{f}(\boldsymbol{x}) \\
    &=\frac{\sum_{\boldsymbol{x}} p_{\theta}(\boldsymbol{x})\exp (-E_{\boldsymbol{\mu}}(\boldsymbol{x})) \boldsymbol{f}(\boldsymbol{x})}{\sum_{\boldsymbol{x}} p_{\theta}(\boldsymbol{x})\exp (-E_{\boldsymbol{\mu}}(\boldsymbol{x}))} \\
    &\approx\frac{\sum_{i=1}^N\exp (-E_{\boldsymbol{\mu}}(\hat{\boldsymbol{x}}^i)) \boldsymbol{f}(\hat{\boldsymbol{x}}^i)}{\sum_{i=1}^N\exp(- E_{\boldsymbol{\mu}}(\hat{\boldsymbol{x}}^i))}.
\end{align*}
The last step is obtained by approximating $p_\theta(\boldsymbol{x})$ with the empirical distribution defined by $\{\hat{\boldsymbol{x}}^i\}_{i=1}^N$, i.e., $\hat{p}^N_\theta(\boldsymbol{x})=\frac{1}{N}\sum_{i=1}^N \delta(\boldsymbol{x}, \hat{\boldsymbol{x}}^i)$.

%

\subsection{Details of Sampling using SIR}

\subsubsection{The Conditional Form of the Optimal Decoding Distribution}
\label{appendix:conditional}

We first start with the auto-regressive factorization of \eqref{equ:residual_energy} at step $t$ which marginalizes out the future from step $t$:
\begin{align*}
    p_{\theta, \boldsymbol{\mu}}(x_t|\boldsymbol{x}_{<t})= p_\theta({x}_t|\boldsymbol{x}_{<t}) \frac{\mathbb{E}_{\boldsymbol{x}'_{>t}\sim p_\theta(\cdot|\boldsymbol{x}_{\le t})}[\exp (-E_{\boldsymbol{\mu}}(\boldsymbol{x}_{\le t}, \boldsymbol{x}'_{>t}))]}{\mathbb{E}_{\boldsymbol{x}'_{\ge t}\sim p_\theta(\cdot|\boldsymbol{x}_{< t})}[\exp(- E_{\boldsymbol{\mu}}(\boldsymbol{x}_{< t}, \boldsymbol{x}'_{\ge t}))]}.
\end{align*}
This form can be derived by breaking down the probability at the last step $p_{\theta,\boldsymbol{\mu}}(x_T|\boldsymbol{x}_{<T})$ and then generalize the derivation to  previous steps.

Then the conditional form of $p_{\theta, \boldsymbol{\mu}}$ given prefix $\boldsymbol{x}_{\le t_0}$ is the product of the per-step probabilities:
\begin{align}
    p_{\theta, \boldsymbol{\mu}}(\boldsymbol{x}_{>t_0}|\boldsymbol{x}_{\le t_0}) 
    &= \prod_{t=t_0+1}^T p_{\theta, \boldsymbol{\mu}}({x}_{t}|\boldsymbol{x}_{\le t_0})\nonumber\\
    &= p_\theta(\boldsymbol{x}_{>t_0}|\boldsymbol{x}_{\le t_0})\prod_{t=t_0+1}^T    \frac{\mathbb{E}_{\boldsymbol{x}'_{>t}\sim p_\theta(\cdot|\boldsymbol{x}_{\le t})}[\exp (-E_{\boldsymbol{\mu}}(\boldsymbol{x}_{\le t}, \boldsymbol{x}'_{>t}))]}{\mathbb{E}_{\boldsymbol{x}'_{\ge t}\sim p_\theta(\cdot|\boldsymbol{x}_{< t})}[\exp(- E_{\boldsymbol{\mu}}(\boldsymbol{x}_{< t}, \boldsymbol{x}'_{\ge t}))]}\nonumber\\
    \label{equ:cond_final}    &=p_\theta(\boldsymbol{x}_{>t_0}|\boldsymbol{x}_{\le t_0})\frac{\exp (-E_{\boldsymbol{\mu}}(\boldsymbol{x}_{\le t_0}, \boldsymbol{x}_{>t_0}))}{\mathbb{E}_{\boldsymbol{x}'_{>t_0}\sim p_\theta(\cdot|\boldsymbol{x}_{\le t_0})}[\exp (-E_{\boldsymbol{\mu}}(\boldsymbol{x}_{\le t_0}, \boldsymbol{x}'_{>t_0}))]}.
\end{align}
The last step is obtained by canceling out the intermediate terms in the product from step $t_0+1$ to $T$.

\subsubsection{Deriving the SIR Approximation} 
\label{appendix:SIR_approx}

The empirical distribution $\hat{p}^{M}_{\theta,\boldsymbol{\mu}}(\cdot|\boldsymbol{x_{\le t_0}})$ induced by SIR approximation can be derived from \eqref{equ:cond_final} by substituting the conditional proposal $p_\theta(\cdot|\boldsymbol{x}_{\le t_0})$ with the empirical distribution $\hat{p}^M_\theta(\boldsymbol{x}_{> t_0}|\boldsymbol{x}_{\le t_0})=\frac{1}{M}\sum_{i=1}^M \delta([\boldsymbol{x}_{\le t_0}, \boldsymbol{x}_{> t_0}], \hat{\boldsymbol{x}}^i)$ where $\{\hat{\boldsymbol{x}}_{>t_0}^i\}_{i=1}^M$ are continuation candidates sampled from the conditional proposal $p_\theta(\cdot|\boldsymbol{x}_{\le t_0})$.
\begin{equation*}
    \hat{p}^{M}_{\theta,\boldsymbol{\mu}}(\boldsymbol{x}_{> t_0}|\boldsymbol{x_{\le t_0}}) = \sum_{i=1}^M \delta([\boldsymbol{x}_{\le t_0}, \boldsymbol{x}_{> t_0}], \hat{\boldsymbol{x}}^i) \frac{\exp(-E_{\boldsymbol{\mu}}(\boldsymbol{x}_{\le t_0}, \hat{\boldsymbol{x}}_{> t_0}))}{\sum_{j=1}^M\exp(-E_{\boldsymbol{\mu}}(\boldsymbol{x}_{\le t_0}, \hat{\boldsymbol{x}}_{> t_0}))}.
\end{equation*}

\section{Related Work}
\label{appendix:related_work}

\subsection{Decoding Methods}

Decoding methods are typically categorized into two main categories: sampling-based methods and search-based methods. Sampling-based methods, which introduce randomness into the selection of next token, are commonly employed in open-ended generation settings to yield diverse texts. Existing sampling-based methods select the next token by sampling from a truncated set of candidates based on heuristics that control the statistics of the next-token probability distribution. For instance, Top-k sampling selects the set of highest $k$ probabilities~\citep{topk}, Nucleus sampling focuses on candidates within the $p$-percentile of the highest probabilities~\citep{holtzman2020topp}, and Typical sampling targets the $\tau$-percentile, which has entropy close to that of the language model~\citep{typical}. However, it remains unclear how these controlled statistical quantities correlate with the aspects of generation quality one might concern. In particular, sampling-based methods are often reported to struggle with maintaining topic relevance and long-term coherence. Conversely, vanilla search-based methods, which search for the sequence that maximizes the probability under the language model, tend to produce repetitive and recursively structured texts in open-ended scenarios. 
Consequently, frequency penalty~\citep{ctrl,DBLP:conf/acl/DouFKSC22} is employed to discourage generating tokens that frequently present in the context. Recently, several works have been proposed to maximize contrastive objectives. Contrastive Decoding~\citep{contrastive_decoding} seeks the token that maximizes the probability difference between an expert LM and an amateur LM. Contrastive Search~\citep{contrastive_search} searches for the token that maximizes the probability given by the LM while minimizing its representation similarity to the previous tokens. Nevertheless, the output of these methods still frequently exhibits redundancy in the long term due to the intrinsic bias in the language model, despite the aim of their objectives to reduce semantic repetition. 
Our method, categorized as a sampling-based method, aims to induce the optimal decoding distribution that explicitly aligns with the underlying distribution of human text, as assessed by evaluation metrics related to a set of chosen aspects.

\subsection{Language Model Training Methods For Quality Improvement}

Prior works also attempted to improve the generation quality by devising new training objectives to further train the language model. One approach involves the design of auxiliary objectives aimed at discouraging the model from learning implausible samples or features by the standard Maximum Likelihood Estimation (MLE) objective. \citet{rep} proposed unlikelihood training objective that directly penalizes token-level and phrase-level repetition. \citet{breakloop} proposed the DITTO objective that penalizes sentence-level repetition loop. \citet{contrastive_search} proposed a contrastive objective that separates the representations of distinct tokens in the context to facilitate decoding with repetition penalty. Despite the direct mitigation of undesired behavior of the model, these objectives are usually applied to the local probability of next token prediction by the language model, which is conditioned on the ground-truth contexts. This leads to a discrepancy between training and inference, i.e., the exposure bias, raising concerns regarding the preservation of the learned properties at the inference stage. Another research direction involves the application of Reinforcement Learning (RL) to optimize the generation model towards sequence-level metrics~\citep{DBLP:journals/corr/RanzatoCAZ15,DBLP:conf/acl/ShenCHHWSL16,seqgan,leakgan,DBLP:conf/ijcai/ShiCQH18}. Although targeting at optimizing the model at a sequence level, most RL approaches often underperform MLE especially with a pre-trained base model. \citet{demonstration} demonstrated the effectiveness of off-policy RL that learns from human demonstration with quality-centric reward over the approach of direct fine-tuning. More recent research explored learning the reward function from human judgment~\citep{DBLP:journals/corr/abs-1907-00456,ziegler_humanPref,ouyang2022training} and has shown to promote the pre-trained language model to generate texts whose quality is more aligned with human preferences. Our method circumvents exposure bias and the challenges associated with RL optimization by deriving the analytic formulation of the proposed optimization problem, which directly aligns the expected metric scores of generated texts with those of human texts.

\subsection{Energy-Based Model for Text Generation}

The Energy-Based Model (EBM)~\citep{hinton2002training, lecun2006tutorial} is a generative model that learns the underlying distribution by relocating energy based on sampled data points. In text modeling, the EBM is attractive due to its global sequence scoring, as opposed to the locally normalized score factorization in Auto-Regressive (AR) models~\citep{Rosenfeld2001whole_sentence,Rosenfeld2001AME}. Theoretical work has demonstrated that the EBM family offers greater expressiveness than the AR model family by encompassing a broader range of computable text distributions~\citep{Lin2021LimitationsOA}. However, two critical aspects of the EBM, namely, learning and sampling, pose challenges for its practical application in text generation. While AR models can be trained by directly maximizing the likelihood of reference data, learning a parametric EBM typically involves Noise-Contrastive Estimation (NCE)~\citep{nce,cond_nce}. With the recent advances in pre-training, previous studies proposed to construct the EBM based on a pre-trained language model with an exponential energy term that is parametrized by a neural network~\citep{residual} or 
a log-linear model \citep{DBLP:conf/conll/ParshakovaAD19}. Specifically, the latter form emerges as the solution based on the generalized maximum entropy principle (or minimum discrimination information principle), incorporating linear constraints on model expectations (see \eqref{optim_problem:min}), which is also explored in Posterior Regularization technique~\citep{PR}. This formulation has also found applications in areas such as controlled text generation with distributional control~\citep{distributional_approach} and calibrating the entropy rate of language models~\citep{calibration_entropy}, among others. On the other hand, achieving scalable and tractable sampling from the EBM has long been a challenge, primarily because of its globally normalized form. Traditional MCMC approaches such as Gibbs Sampling~\citep{Geman1984StochasticRG} or Metropolis Hastings~\citep{Metropolis1953EquationOS} often face scalability issues concerning data dimensionality and model complexity. Recent endeavors~\citep{dist_RL,distributional_approach} proposed to learn another AR policy guided by the EBM. Nevertheless, this approach compromises the modeling capacity of the EBM, as it involves learning an AR policy that minimizes the forward Kullback-Leibler (KL) divergence towards the solution of the optimization problem. 
\update{Finally, energy-based reranking methods~\citep{bhattacharyya-etal-2021-energy, fernandes2022quality,freitag-etal-2022-high} are explored in machine translation where the generations are re-ranked according to pre-defined reference-based metrics. However, these methods cannot guarantee to improve the distribution of the base generation model to resemble the underlying text distribution from humans. They only focused on maximizing certain quality metrics while neglecting various aspects of human texts (e.g., quality, diversity, repetition and etc.) which are necessary for achieving \textbf{human-like generation}. Our approach generalizes the constraint functions to encompass evaluation metrics across various aspects in the text decoding formulation with the goal of alignment with human texts, and facilitates tractable sampling from the EBM by leveraging a strong AR proposal through Sampling-Importance-Resampling (SIR), which possesses a well-defined convergence rate. 
}

\update{
\section{Discussion on the Forward and Reverse KL Divergence}
\label{appendix:kl_div_discuss}

We provide an in-depth discussion on the forward and reverse KL divergence. We reuse the notation in Section \ref{sec:formulation}, and denote the  distribution induced by a language model as $p_\theta$ and the decoding distribution as $q$. Although both the forward and the reverse KL divergence can be used to measure the distance between $q$ and $p_\theta$, they are not symetric~\citep{bishop2006pattern}, i.e., $D_{\textsc{KL}}(p_\theta\|q)\neq D_{\textsc{KL}}(q\|p_\theta)$, and have distinct differences in shaping the decoding distribution.

Minimizing the reverse KL divergence, $D_{\textsc{KL}}(q\|p_\theta)$ is known to encourage \textbf{zero-forcing} behavior~\citep{malinin2019reverse}, as it forces $q(x)=0$ when $p_\theta(x)=0$. This restricts the decoding distribution $q$ to be \textbf{mode-seeking} and only explores the modes of the target distribution $p_\theta$ which contains samples with high likelihood under the ground-truth model's distribution $p_\theta$, and thus ignores the outliers.

Whereas, minimizing the forward KL divergence, $D_{\textsc{KL}}(p_\theta\|q)$ encourages \textbf{zero-avoiding} behavior, i.e., it avoids $q(x)=0$ when $p_\theta(x)>0$. This leads $q$ to be \textbf{mean-seeking} which spreads its support to cover all the non-zero density regions of the model distribution $p_\theta$. As the distribution of language models are known to have unreliable long tails~\citep{holtzman2020topp}, the decoding distribution that minimizes the forward KL can further overestimate those tails to produce low-quality samples~\citep{chan2022greedification}.

Overall, comparing to the forward KL, minimizing the reverse KL leads to a more focused distribution and ignores the long tail in the given language model distribution, which is more suitable for our decoding scenario that demands generation quality. We also add an intuitive visualization to illustrate the different behaviors of reverse and forward KL in Figure \ref{fig:rev_fw_kl}.
}

\begin{figure}[h]
    \centering
    \begin{subfigure}{0.4\textwidth}
    \includegraphics[width=\textwidth]{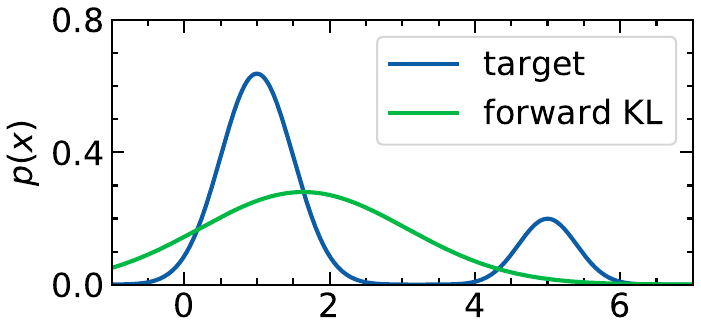}
        \caption{Forward KL.}
    \label{fig:fw_kl}
    \end{subfigure}
    \hspace{0.1\textwidth}
    \begin{subfigure}{0.4\textwidth}
        \includegraphics[width=\textwidth]{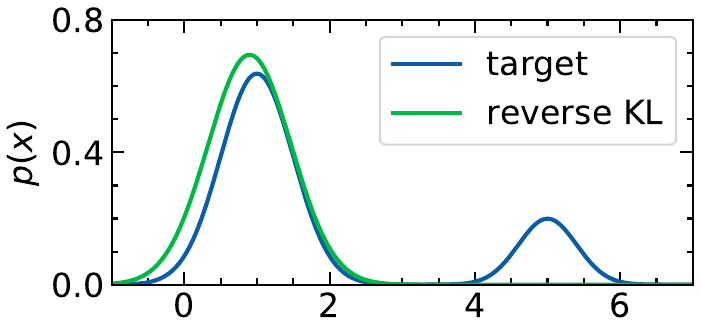}
        \caption{Reverse KL.}
    \label{fig:fw_kl}
    \end{subfigure}
    \caption{The mean-seeking behavior of forward KL and the mode-seeking behavior of reverse KL.}
\label{fig:rev_fw_kl}
\end{figure}

\section{Human evaluation Setup}
\label{appendix:human_evaluation}
\subsection{Annotation Process}
We conduct pair-wise human evaluation with Amazon Mechanical Turk workers, providing them with comprehensive instructions and detailed explanations of annotation features. To ensure that participants can fully comprehend and make accurate judgments, we conduct a pilot qualification test using 30 pairs. For the formal annotation process, we randomly select 100 prefixes, each with a length of 32, from the test set of Wikipedia. We employ GPT-2 XL (1.5B) with four baseline decoding methods (i.e., contrastive decoding, contrastive search, nucleus sampling, and typical decoding) and our proposed \name{} algorithm to generate text continuations, each with a maximum length of 256. 

Each Human Intelligence Task (HIT) consists of the prefix, the two continuations, and evaluation questions regarding three evaluation criteria. 
The HITs are randomly shuffled and evaluated by three annotators, adding up to 1,200 annotation samples (100 prefixes $\times$ 4 baselines $\times$ 3 criteria). Annotators are required to assess and choose the better continuation or opt for a draw according to three evaluation criteria: coherence, fluency, and informativeness. Annotators are paid at a rate of \$0.5 per HIT.

Participation in the annotation task is exclusively open to qualified workers that meet the following qualifications:
\begin{itemize}
    \item HIT Approval Rate must be greater than 95\%.
     \item Number of HITs Approved must exceed 500. 
\end{itemize}

For quality control,  we conduct manual cross-validation to check the annotation results based on the following rejection principles. The unqualified HITs are rejected and requested re-annotation.
\begin{itemize}
    \item Review time is less than 500s. 
    \item Obvious Misjudgment. Key points include: (i) Assigned higher fluency to the text with more grammar mistakes or repetition. (ii) Assigned higher coherence to the text with more jumbled topics. (iii) Assigned higher informativeness to the text with more redundant expressions.
\end{itemize}

\subsection{Annotation Guidelines}
\textbf{Instruction:} We will give a short prefix and two continuations generated by two systems. You are required to compare the \textit{fluency}, \textit{coherence}, and \textit{informativeness} of two continuations based on the detailed guidelines. The three annotation features are explained as follows:

\textbf{Coherence:} Coherence assesses the semantic consistency of the generated text with the prefix. Key aspects encompass:
\begin{itemize}
    \item \textbf{Discourse Coherence:} The generated text maintains semantic connections and appropriate transitions.
    
    \item \textbf{Logical Coherence:} The generated text maintains logically consistent and avoids self-contradiction (e.g., chronological, cause-and-effect, narrative).
    
    \item \textbf{Global Coherence:} The generated text aligns with the provided prefix and remains the same topic, event, or entities.
\end{itemize}

\textbf{Fluency:} Fluency refers to the extent to which the generated text is smooth, highly readable, and easily comprehensible. In contrast to coherence, fluency primarily underscores intra-sentence readability. Several pivotal facets encompass:
\begin{itemize}
    \item \textbf{Grammar:}  Grammatical correct and adheres to proper English grammar.
    \item \textbf{Structure:} Clarity in sentence structure.
    \item \textbf{Semantic:} Absence of repetitions or omissions.
    \item \textbf{Vocabulary:} Avoidance of inappropriate phrase combinations or symbols.
\end{itemize}

\textbf{Informativeness:} Informativeness refers to the richness and diversity of generated text. Key aspects include:

\begin{itemize}
    \item \textbf{Comprehensive Information:} The text should offer valuable information with engaging details, avoiding redundancy and triviality.
    \item \textbf{Lexical Diversity:} Use varied vocabulary and sentence structures.
    \item \textbf{In-depth Details:} Provide pertinent, in-depth details.
    \item \textbf{Novelty:} Introduce new elements while maintaining \textit{coherence}.
\end{itemize}

\update{
\section{Runtime Analysis of Algorithm \ref{alg:mu_estimation}}
\label{appendix:runtime_alg1}
We first analyze the computational cost of Algorithm \ref{alg:mu_estimation}, which mainly consists of two parts. 

The first part takes one run of unconditional sampling from the base language model (line 2 of Algorithm \ref{alg:mu_estimation}). We typically set the number of samples to be the same as the development set, so that this cost is equal to one time sampling on the development set.

The second part is estimating the parameters $\{\mu_i\}^K_{i=1}$ by iterative gradient descent (line 3-6 of Algorithm \ref{alg:mu_estimation}). As the number of parameters is small (equals to the number of constraints which is usually in the scale of dozens in practice), and the metric scores $\{f_i(x)\}^K_{i=1}$ can be pre-computed, the total computational overhead is very small. In our experiment, it takes less than one minute to perform thousands of gradient descent steps, which can be ignored comparing to the cost in sampling from the language model.

Comparing to typical grid search of hyper-parameters which requires $H\times R$ runs of sampling in the development set, where $H$ stands for the number of hyper-parameters and $R$ is the number of trials for each hyper-parameter. Even the labor intensive manual tuning methods require at least one run of sampling on the development set, which do not have any advantage over our Algorithm \ref{alg:mu_estimation} in terms of computational cost.

}

\update{
\section{Complexity and Runtime Analysis of Decoding Methods}
\label{appendix:runtime_alg2}
We provide analysis about the computational complexity and the runtime in practice of different decoding methods and our method (Algorithm \ref{alg:sampling_sir}). In the following, we consider the case of generating a completion with full length $T$ given an input prompt with length $t_0$ (much smaller than $T$). To evaluate the real runtime performance, we follow the setting in Section \ref{sec:ablation} which uses GPT2-XL to generate completions with maximum length of 256 on the test set of Wikitext. The experiment was done on a Tesla V100.

\textbf{Greedy Decoding.} At each decoding step, the Transformer model attends to the previous tokens with the complexity of $O(T)$ and then performs positional transformation, e.g., linear mapping, layer normalization and etc., which is not dependent on $T$. Hence, the full complexity of generating the entire completion is $O(T^2)$. As greedy decoding only picks the token with the maximum probability, it does not incur additional complexity at each decoding step. In our experiments, the runtime for decoding a completion using greedy decoding is 9.33 seconds on average.

\textbf{Top-$k$ Sampling.} Top-k sampling has nearly the same complexity as greedy decoding. The only difference is that at each step it samples from the subset that has top-$k$ probabilities rather than taking the maximum. In our experiments, the runtime for decoding a completion using top-$k$ decoding is 9.34 seconds on average, which is almost the same as greedy decoding.

\textbf{Nucleus Sampling.} At each decoding step, Nucleus sampling sorts the probability distribution, which incurs an additional complexity of $O(V\log V)$ where $V$ is the vocabulary size comparing to greedy decoding or top-k sampling. In our experiments, the runtime for decoding a completion using nucleus decoding is 10.54 seconds on average, which is $1.13$ times of the latency of the greedy decoding.

\textbf{Contrastive Decoding.} Contrastive decoding uses two language models with different sizes for decoding, where a single language model has the complexity of $O(T^2)$. During decoding, they use beam search which increases the total forward complexity to $O(BT^2)$ for a beam size of $B$. Additionally, at each decoding step, beam search has to sort the probabilities in the beams which has a complexity of $O(BV\log (BV))$. In our experiments, we followed the beam size of $5$ suggested in\cite{contrastive_decoding}, and the runtime for decoding a completion using contrastive decoding is 12.10 seconds on average, which is $1.29$ times of the latency of the greedy decoding.

\textbf{Contrastive Search.} At each decoding step, contrastive search has to perform two forward passes of the language model. After the first forward pass, the algorithm selects the top-$k$ candidates and performs the second forward pass to get the hidden representations of these candidates. This increases the per-step complexity to $O(T) + O(kT)$, which results in the total complexity of $O(T^2) + O(kT^2)$. Note that the two forward passes cannot be parallelized and the serialized execution slows down the run time. In our experiments, we set $k$ to $5$ according to \citet{contrastive_search}, and the runtime for decoding a completion using contrastive search is 11.82 seconds on average, which is $1.27$ times of the latency of the greedy decoding.

\textbf{\name{} (Ours).} Our decoding method \name{} has two stages, i.e., candidate sampling and re-sampling to approximate the optimal sampling distribution. In the first sampling stage, we sample $M$ candidate sequences in parallel, whose total complexity is $O(MT^2)$. In the second re-sampling stage, we compute energy for each candidate sequence and re-sample from the distribution defined by the energy. The second stage has a complexity of $O(CM)$ where $C$ denotes the complexity of calculating the energy, which is much smaller than the cost of sampling. Note that unlike contrastive decoding and contrastive search, in the first sampling stage of our method, sampling $M$ sequences can be fully parallelized, which is highly optimized by modern GPU hardware. Hence the actual inference latency grows much slower when increasing $M$. In our experiments, we set $M$ to 25, and the runtime for decoding a completion using \name{} is 12.58 seconds on average, which is $1.35$ times of the latency of the greedy decoding.
}

\update{
\section{Convergence and Sensitivity Analysis of Algorithm \ref{alg:mu_estimation}}
\label{appendix:convergence_mu}

We analyze the convergence of $\boldsymbol{\mu}$ in Algorithm \ref{alg:mu_estimation} using different initializations of $\boldsymbol{\mu}$. Concretely, we consider the following three initializations:
\begin{itemize}
    \item \texttt{zero}: Initializing all dimensions of $\boldsymbol{\mu}$ to $0$.
    \item \texttt{randn}: Initializing dimensions of $\boldsymbol{\mu}$ with random numbers sampled from a standard normal distribution $\mathcal{N}(0,1)$.
    \item \texttt{rand}: Initializing dimensions of $\boldsymbol{\mu}$ with random numbers sampled from a uniform distribution $U[0,1)$.
\end{itemize}

We run Algorithm \ref{alg:mu_estimation} and consider the following nine metrics in the main experiment: \textsc{seq-rep-2}, \textsc{seq-rep-3}, \textsc{seq-rep-4}, \textsc{tok-rep-8}, \textsc{tok-rep-16}, \textsc{tok-rep-32}, \textsc{coh}, \textsc{div}, $e^{\textsc{ent}}$ on Wikitext. Each $\mu_i\ (i\in\{1,\dots, 9\})$ corresponds to the above metric respectively. 

We plot the optimization trajectory of each $\mu_i$ under Algorithm \ref{alg:mu_estimation} with different initializations in Figure \ref{fig:mus}. We took 5 runs with different random seeds for each initialization. 
\begin{figure}[h]
    \centering
    \includegraphics[width=\textwidth]{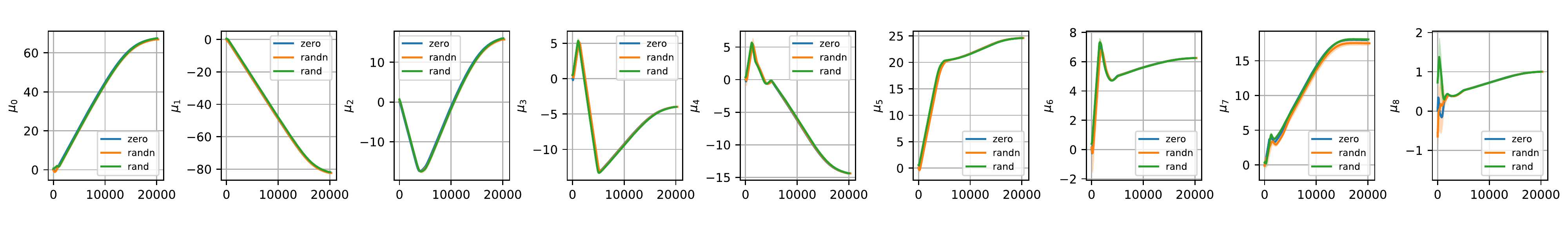}
    \caption{{Optimization curve of each $\mu_i$ with three different initializations: \texttt{zero}, \texttt{randn}, \texttt{rand}.}}
    \label{fig:mus}
\end{figure}

From the result, we observe that the optimization of $\boldsymbol{\mu}$ is quite stable and all the trajectories converge to almost the same optimal solutions under different initializations.
}

\update{
\section{Robustness Analysis}
\label{appendix:robust}

We study the robustness of the alignment results on different metrics when one chooses to optimize a single metric. We conduct our experiments using GPT-2 XL on Wikitext and perform a grid search with different combinations of $M\in\{10,25,50,100\}$ and $\tau\in\{0.96,0.97,0.98,1.0\}$ to optimize towards aligning a single metric in one of the following metrics: \textsc{seq-rep-4}, \textsc{tok-rep-32}, \textsc{coh}, \textsc{div}, $e^{\textsc{ent}}$. From the result in Table \ref{tab:robust}, we observe that the performance on different metrics has a very small variance even when we optimize for a single metric, which indicates the robustness of \name{}.

\begin{table}[h]
    \centering
    \small
    \begin{tabular}{l|ccccc}
    \toprule
     \multicolumn{1}{l}{Model} &  \textsc{sr-4} &  \textsc{tr-32} & \textsc{coh} & \textsc{div} & $e^{\textsc{ent}}$  \\
    \midrule
    Reference &  0.48 & 21.3 & 62.3 & 92.5 & 23.2 \\
    \midrule
    opt. \textsc{sr-4} & 0.42 & 22.5 & 62.5 & 92.2 & 22.8 \\
    opt. \textsc{tr-32} & 0.38 & 21.2 & 62.4 & 94.1 & 24.3 \\
    opt. \textsc{coh} & 0.38 & 21.2 & 62.4 & 94.1 & 24.3 \\
    opt. \textsc{div} & 0.55 & 22.9	& 63.3 & 92.7 & 22.2 \\
    opt. $e^{\textsc{ent}}$ & 0.40 & 21.5 & 61.6 & 93.9 & 23.1 \\
    \bottomrule
    \end{tabular}
    \caption{Quality of generation measured under all metrics when one only optimizes for a single metric at decoding time. For instance, ``opt. \textsc{sr-4}'' means only optimizing \textsc{sr-4}.}
    \label{tab:robust}
\end{table}

}

\update{

\section{Additional Results on Text Summarization}
\label{appendix:summary}

To demonstrate the universability of our method, we conducted an experiment on the standard summarization benchmark CNN/DailyMail using an off-the-shelf summarization model \texttt{pegasus-cnn\_dailymail}\footnote{\url{https://huggingface.co/google/pegasus-cnn_dailymail}}. As shown in Table \ref{tab:cnndm}, compared with the baseline sampling methods, our method achieves better performance in aligning with most aspects including repetition (\textsc{sr-3}, \textsc{tr-8}), coherence (\textsc{coh}), diversity (\textsc{div}) and information ($e^{\textsc{ent}}$) (better when the score is closer to that of the reference), and ROUGE scores (better when the score is higher).

\begin{table}[h]
    \centering
    \small
    \begin{tabular}{l|cccccccc}
    \toprule
     \multicolumn{1}{l}{Model} &  \textsc{sr-3} &  \textsc{tr-8} & \textsc{coh} & \textsc{div} & $e^{\textsc{ent}}$ & R-1 & R-2 & R-L \\
    \midrule
    Reference &  0.10 & 2.93 & 80.1 & 99.1 & 31.2 & - & - & - \\
    \midrule
    Top-k & 0.15 & 3.10 & 79.6 & 99.2 & 35.0 & 38.7 & 15.0 & 35.6 \\
    Nucleus & 0.15 & 3.00 & 72.3 & 99.0 & 104.3 & 34.4 & 12.2 & 31.6 \\
    Typical & 0.11 & 2.71 & 76.3 & 99.2 & 81.8 & 34.3 & 12.1 & 31.5 \\
    CS & 0.15 & 3.14 & 80.9 & 98.8 & 28.4 & 40.9 & 16.9 & 37.8 \\
    Daemon & \textbf{0.10} & \textbf{2.98} & \textbf{80.1} & \textbf{99.2} & \textbf{29.7} & \textbf{41.9} & \textbf{18.4} & \textbf{38.9} \\
    \bottomrule
    \end{tabular}
    \caption{Results on the CNN/DailyMail.}
    \label{tab:cnndm}
\end{table}

}

\section{Experiment Details}
\label{appendix:exp_details}

\subsection{Statistics of Datasets}
\label{appendix:data_stats}

\begin{table}[h]
    \centering
    \begin{tabular}{l|cccc}
    \toprule
    Domain & \# Dev set & \# Test set  & Prefix length & Full length\\
    \midrule
    Wikipedia & 512 & 802 & 24.8 & 183.1 \\
    News & 512  & 1488 & 25.6 & 309.5\\
    \bottomrule
    \end{tabular}
    \caption{Statistics of the data used in the experiments.}
    \label{tab:stats}
\end{table}

The data statistics of each domain are summarized in Table \ref{tab:stats}. For each example, the first 32 words are used as the prefix. Each word is tokenized into subwords before fed into the embedding layer of the language models. Both GPT-2 and OPT use Byte-Pair Encoding subword segmentation. The average length of subwords in the prefix is around 32. The maximum length of subwords in the complete generation is restricted to 256. During evaluation, the subwords length of human references is also truncated to 256 for a reasonable comparison.

\subsection{Implementation Details}
\label{appendix:implementation_details}

We first describe the implementation details of the baselines. We first consider three canonical sampling-based methods: \textbf{Top-k} sampling~\citep{topk} samples from the set of candidates with top-$k$ probabilities. \textbf{Nucleus} sampling~\citep{holtzman2020topp} samples from the set of $p$-percentile that has minimum number of candidates. \textbf{Typical} decoding~\citep{typical} samples from the set of $\tau$-percentile whose entropy is close to the entropy of the LM. For search-based methods, besides vanilla \textbf{Greedy} decoding, we also consider two recent methods that maximize contrastive objectives: Contrastive Decoding (\textbf{CD})~\citep{contrastive_decoding} searches for the token that maximizes the probability difference between an expert LM and an amateur LM scaled with temperature $\tau$ within a candidate set controlled by $\alpha$. Contrastive Search (\textbf{CS})~\citep{contrastive_search} searches for the token that maximizes probability given by the LM while minimizing its representation similarity to the previous tokens with an intensity of $\alpha$ within the top-$k$ candidate set. For all baselines, we use the recommended hyper-parameter settings in the original papers. Specifically, we use $k=50$ for Top-k sampling, $p=0.95$ for Nucleus sampling, and $\tau=0.95$ for Typical decoding. For contrastive decoding, we directly use the generated texts provided by the official implementation\footnote{\url{https://github.com/XiangLi1999/ContrastiveDecoding/}.} with hyperparameter setting: $\alpha=0.1$, $\tau=0.5$ for GPT-2 and $\tau=1.0$ for OPT. For contrastive search, we follow the official implementation on the datasets~\citep{cd_vs_cs} that uses $\alpha=0.6, k=5$. 

For our method \name{} in the main results, we use the following metrics in the constraints: \textsc{sr-2}, \textsc{sr-3}, \textsc{sr-4}, \textsc{tr-8}, \textsc{tr-16}, \textsc{tr-32}, \textsc{coh}, \textsc{div}, $e^{\textsc{ent}}$ described in \S{\ref{sec:evaluation_metrics}}. For coefficients estimation, we estimate the target expectation on the dev set and fit the coefficients using Adam with learning rate 5e-3 until a minimum error 1e-3 is reached. The inference of \name{} with the base model of either GPT-2 XL or OPT-6.7B can be done on a Tesla V100 with a batch size of 1.


\subsection{Full Main Results}
\label{appendix:full_res}

We present the full results of all metrics on the Wikipedia and News domain in Table \ref{tab:full_wikitext_main} and Table \ref{tab:full_wikinews_main} respectively.

\begin{table}[t!]
    \centering
    \small
    \begin{tabular}{l|cccccccccc}
    \toprule
     \multicolumn{1}{l}{Model} & \textsc{sr-2} & \textsc{sr-3} & \textsc{sr-4} & \textsc{tr-8} & \textsc{tr-16} & \textsc{tr-32} & \textsc{coh} & \textsc{div} & $e^{\textsc{ent}}$ & \textsc{mau} \\
    \midrule
    Reference & 5.82 & 1.40 & 0.48 & 5.77 &	12.8 & 21.3 & 62.3 & 92.5 & 23.2 & - \\
    \midrule
    Greedy & 66.2 & 62.9 & 60.9 & 13.5 & 39.4 & 65.5 & 60.3 & 8.03 & 2.29 & 59.7\\
    Top-$k$ & 8.48 & 3.28 & 2.11 & 6.74 & 14.5 & 23.4 & 60.9 & 87.8 & 10.1 & 77.8\\ 
    Nucleus & \textbf{5.51} & 1.84 & 1.19 & \textbf{5.98} & \textbf{12.4} & 20.0 & 57.3 & 92.4 & 17.3 & 78.3\\ 
    Typical & 3.98 & 1.24 & 0.81 & 5.07 & 10.7 & 17.4 & 54.9 & 94.5 & 30.1 & 78.7\\
    CD & 10.5 & 3.00 & 1.31 & 8.00 & 17.3 & 28.2 & 68.7 & 86.0 & 7.55 & 77.8\\
    CS & 6.71 & 2.51 & 1.78 & 6.40 & 14.1 & 23.0 & 56.9	& 90.6 & 5.25 & 83.3\\
    \name{} & 6.28 & \textbf{1.31} & \textbf{0.42} & 6.59	& 13.6 & \textbf{22.5} & \textbf{62.5} & \textbf{92.2} & \textbf{22.8} & \textbf{88.1} \\
    \midrule
    Greedy & 61.4 & 57.3 & 54.8 & 12.6 & 35.5 & 60.4 & 62.0 & 12.6 & 2.78 & 64.8 \\
    Top-$k$ & 9.14 & 3.77 & 2.44 & 7.33 & 15.1 & 24.1 & 61.3 & 86.6 & 13.9 & 77.5 \\ 
    Nucleus & 7.69 & 3.36 & 2.33 & 6.48 & 13.5 & 21.9 &	59.1 & 88.6 & 18.9 & 80.1\\ 
    Typical & 5.16 & 1.70 & 1.06 & \textbf{5.70} & \textbf{12.2} & 19.6 & 57.0 & 92.9 & 31.9 & 77.7\\
    CD & 11.8 & 4.90 & 2.92 & 6.81 & 15.3 & 26.5 & 68.6 & 82.3 & 11.7 & 78.6 \\
    CS & \textbf{5.89} & 1.94 & 1.14 & 6.32 & 13.7 & 21.7 & 57.7 & 91.8 & 8.72 & 83.3 \\
    \name{} & \textbf{5.89} & \textbf{1.33} & \textbf{0.38} & 6.19 & 13.3 & \textbf{21.6} & \textbf{62.3} & \textbf{92.6} & \textbf{22.7} & \textbf{90.7}\\
    \bottomrule
    \end{tabular}
    \caption{Full main results of automatic evaluation on the Wikipedia domain using GPT-2 XL and OPT-6.7B. For all metrics, the best scores (in boldface) are the \textit{closest} to the human scores except for \textsc{mau}, which is better when \textit{higher}.}
    \label{tab:full_wikitext_main}
\end{table}

\begin{table}[t!]
    \centering
    \small
    \begin{tabular}{l|cccccccccc}
    \toprule
     \multicolumn{1}{l}{Model} & \textsc{sr-2} & \textsc{sr-3} & \textsc{sr-4} & \textsc{tr-8} & \textsc{tr-16} & \textsc{tr-32} & \textsc{coh} & \textsc{div} & $e^{\textsc{ent}}$ & \textsc{mau} \\
    \midrule
    Reference & 4.76 & 0.93 & 0.29 & 4.72 & 10.8 & 18.7 & 66.6 & 94.1 & 13.8 & - \\
    \midrule
    Greedy & 58.7 & 55.1 & 53.2 & 8.06 & 28.0 & 58.2 & 63.8 & 13.2 & 2.19 & 65.2\\
    Top-$k$ & 6.16 & 1.80 & 0.95 & 5.26 & 11.8 & 20.3 & 64.7 & 91.7 & 8.17 & 96.3 \\ 
    Nucleus & 4.91 & 1.39 & 0.80 & 4.93 & 10.9 & 18.7 & 60.8 & 93.5 & 11.0 & 95.3\\ 
    Typical & 3.62 & 0.83 & 0.42 & 4.50 & 9.98 & 16.9 & 57.2 & 95.3 & 18.2 & 95.0\\
    CD & 7.45 & 1.78 & 0.63 & 5.97 & 13.5 & 23.2 & 71.2 & 90.5 & 6.55 & 95.1 \\
    CS & 4.45 & 1.23 & 0.77	& 4.68 & 11.0 & 19.2 & 63.6 & 94.1 & 4.18 & 95.7\\
    \name{} & \textbf{4.64} & \textbf{0.71} & \textbf{0.18} & \textbf{4.71}	& \textbf{10.8} & \textbf{18.7} & \textbf{66.3} & \textbf{94.5}	& \textbf{13.7} & \textbf{97.4} \\
    \midrule
    Greedy & 51.3 & 47.2 & 45.2	& 7.75 & 24.5 & 51.0 & 63.6 & 21.9 & 2.72 & 70.7\\
    Top-$k$ & 6.73 & 2.42 & 1.53 & 4.93	& 11.4 & 19.9 & \textbf{65.7} & 90.5	& 10.7 & 95.7 \\ 
    Nucleus & 6.03 & 2.13 & 1.37 & 4.95	& 11.1	& 19.2 & 63.3 & 91.5 & 12.2 & 95.3\\ 
    Typical & 4.59 & 1.47 & 0.95 & 4.52 & 10.1	& 17.7 & 59.4 & 93.7	& 19.4 & 95.2\\
    CD & 8.34 & 3.23 & 1.94	& \textbf{4.91} & 11.5 & 21.0 & 71.7 & 87.5 & 9.20 & 95.2 \\
    CS & \textbf{4.74} & 1.74 & 1.18	& 4.52 & 10.6 & 18.3 & 62.9 & 93.2 & 6.69 & 94.0\\
    \name{} & 4.30 & \textbf{0.67} & \textbf{0.25} & 4.52	& \textbf{10.7} & \textbf{18.8} & 64.8 & \textbf{94.9}	& \textbf{13.6} & \textbf{97.2}\\
    \bottomrule
    \end{tabular}
    \caption{Full main results of automatic evaluation on the News domain using GPT-2 XL and OPT-6.7B. For all metrics, the best scores (in boldface) are the \textit{closest} to the human scores except for \textsc{mau}, which is better when \textit{higher}.}
    \label{tab:full_wikinews_main}
\end{table}


\subsection{Qualitative Cases}
\label{appendix:cases}

We present qualitative cases generated by four baselines and \name{} in Table \ref{tab:case1}, \ref{tab:case2}, \ref{tab:case3}, \ref{tab:case4}, \ref{tab:case5}, \ref{tab:case6}.

\begin{table}[!ht]
    \centering
    \small
    \begin{tabularx}{\textwidth}{r X}
    \toprule[1pt]
    \textit{Prefix} & Morris Dickstein of The New York Times was more critical of the book, calling it '' a misconceived and badly executed novel, a sideshow distraction from\\ 
    \midrule[0.5pt]
    \midrule[0.5pt]
    \textbf{CD} &  a much bigger story '' and saying the book's author had failed '' in her most basic duty to the public.''
    
    The Times review, by David Streitfeld, is \errfl{here} and the book review by Morris Dickstein, by the Times, \errfl{here}. The review by \errin{Morris is more critical, calling it ''a misconceived'' and '' badly executed novel,''} a sideshow distraction from a '' much bigger'' story and calling the author a failure ''in'' her ''basic'' duties. The review by David is more positive. He says that it ''has its charms, and the author is clearly a talented story teller,'' and he praises the author for ''her skill in weaving an engaging tale.'' \errfl{The book is available here.} The book by \errin{Morris is more critical. He calls the novel a sideshow distraction and a ''misconceived and'' a badly ''imperfectly'' executed novel, a ''fails in its most basic'' duties} \\
    \midrule[0.5pt]
    \textbf{CS} &  the main event.'' 
    
    In an interview with the Los Angeles Times, Morris Dickstein said, ''There's no reason to write a book about this that's going to get people's attention except to make it into a movie, and I'm not a movie person at all. The idea of making a movie out of this is a bad idea in my opinion.''
    
    Morris Dickstein's comments were met with \errco{derision} by a number of authors, including David Mamet, who wrote in his review of the book, ''It's hard to imagine a writer more in the \errco{vanguard of the literary avant-garde than Morris S. Dickstein, whose work is the stuff of literary legend}: The Man Who Knew Too Much, A Passage to India, All Quiet on the Western Front, To Kill a Mockingbird to name a few of his masterpieces.''
    
    Mamet went on to say, ''Dick\\
    \midrule[0.5pt]
    \textbf{Typical} & its own lame screenplay'' while reviewing edition number 632, 1952. Here's an excerpt from an actual phone interview that took place between Normen's editor, Margaret Woods Hamlin, and a reporter from Life magazine. \errco{You can try remaining objective when it comes to incompetent film scripts, then.} 
    
    \errco{Madame Normen in 1952 with one of the terrible terrible script-writing venues where she allowed herself to stand and present like an academy award nominee (the anything can be anything, anybody can fly, anybody can speak ad lib, ''These words and ideas can put a tray of mutton bones into a mouse's muffin; to me they mean nothing.'')}
    
    \errco{Princess Ping-Pong: What do you mean you hate that book? Should I deny it?}
    
    \errco{Normen: That book is re-naming ballet for beginners. Do you know, one of the earliest posters coming from front pages in the finest role they can find} \\
    \midrule[0.5pt]
    \textbf{Nucleus} & 22900 and One Hundred Thousand Grapes. \errco{In 2010, he published a book called The Disappearance of the Jews in Europe: A Statistical History, and it just came out today.}
    
    In this episode, which makes for a very odd sort of audio, A.O. Scott takes over the show and discusses the Holocaust in nearly eight minutes, describing the era, its mood, the statistics, its victims, its symbols, the orchestrating forces, and even some of its realities. All of it is chilling, really, coming from a man who was born one month after the holocaust ended.
    
    \errco{Best listen: Enjoy the full minute and twenty-five seconds if you can.}
    
    This week, we've got an exclusive preview of a whole new book from George Steiner, on the Holocaust: \errfl{www.housedistory.org/horusmacherbooks.}
    
    You can also hear A.O. Scott and I \\
    \midrule[0.5pt]
    \textbf{\name{} } & the questions of it's own dubious existence, and a carbon copy of a similarly flawed book that deals with the woman the author has always hated – her mother – in a more kind and humble way.''
    
    A correct assessment of the novel as a book beats an unduly critical one quoted a correctly dismissive one. The book is good – it's an intelligent and emotional account of the social influence of the publishing industry on children's books, part of Dickstein's ongoing concern with capitalist and legal reform.
    
    Dickstein doesn't spoon-feed the reader plot points and he allows the reader to create her own meaning in constructing the novel's narrative. The author doesn't wander into the busy life of a teenager, which we remember often enough.
    
    The opinions of our readers are important to us – \errfl{just click on one of these and pass along to friends.} 
    
    This opinion article was written by an independent writer. The opinions and views expressed herein\\
       \bottomrule[1pt]
    \end{tabularx}
    \caption{Randomly selected generations from the test set of the Wikipedia domain.}
    \label{tab:case1}
\end{table}

\begin{table}[!ht]
    \centering
    \small
    \begin{tabularx}{\textwidth}{r X}
    \toprule[1pt]
    \textit{Prefix} & Chad is a landlocked country in Africa whose northern region lies within the eastern reaches of the Sahara Desert and whose southern region lies within the eastern portion of the\\ 
    \midrule[0.5pt]
    \midrule[0.5pt]
    \textbf{CD} & Arabian Desert. Chad's capital city is Ndjamena and it borders the Republic of \errin{Mali on its west, Sudan to its north, the Central Sudanese Republic to the north-east, the Central Sahel on its eastern side and the Republic of Nigeria in the east. Chad is bordered to its south-west and east by Cameroon, Sudan and Libya, while its west by the Republic of Chad and to its north by Algeria. Chad shares a land boundary with Sudan to its west and with Libya and Mali on the north and west respectively.} Chad's population of approximately 11 million people are mainly Christian, while a small Muslim population exists. Chad's economy is largely agro-based with agriculture and cattle raising the major economic activities, while mining is also a significant economic sector. Chad's main export products include cocoa and oil palm, while its major import is crude petroleum and refined oil products. Chad's main natural resource are its mineral deposits, including\\
    \midrule[0.5pt]
   \textbf{CS}&Sahel.

   Chad has a population of approximately 18 million, of which more than half are ethnic Tuareg and Berber populations. In recent years, there has been a rise in Islamist extremism in the country, which has contributed to the country's security situation. The government has taken steps to counter this threat, including the deployment of military forces to the border with Mauritania and the use of air strikes against al-Qaeda-linked militants in the north of the country.
    
  \errco{Terrorism Threat Worldwide - 2014 World Terrorism Report (WT) The 2014 Worldwide Threat Assessment (WT) from the Combating Terrorism Center at West Point, New York, assesses that the threat from transnational terrorist groups remains high. The report, based on data from January through December 2014, identifies four groups that pose the greatest threat to the United States: ISIL (Islamic State of Iraq and the Levant), Boko Haram, Jemaah Islamiyah (JI}\\
\midrule[0.5pt]
    \textbf{Typical}& Great Rift Valley. Chad founded the Empire of the Camels sometime after the reign of Muhammad's grandson Hassan II.

\errfl{Contents show]}

\errfl{Geography Edit}

Chad is one of the three remaining provinces in Africa that lies under the jurisdiction of the Holy Roman Empire. Vast swathes of territory are under the dominion of the Kingdom of Niger, although as a unified independent country, their military is limited to a single regiment of regular infantry and reserves of trained cavalry. Its evaluation rate is so low that the Realm Style of the western continent is the standard of the Kingdom. The southern border lies on Rivers of Gas and Darell. During the 20th Century, Chad was experianced the devastating civil war known as the Azawad War with the Kingdom of the Camels. Its long-range artillery is inferior to the Kingdom's regiment of heavy artillery—dark-reddish guns. An earldom-based system of lands are spread\\ 
\midrule[0.5pt]
\textbf{Nucleus}& Great Rift Valley. Chad is bordered by the Ethiopian border. Chad is located on the north-west edge of the Sahara Desert and is located within the region of the Great Rift Valley.

\errfl{Language [ edit ]}

The official languages in Chad are Standard Arabic, Coptic, and Rambani. Other major tongues include the Acholi languages of the Amhara region, the Jolof languages, and the Mboik languages of the Niger Delta.

\errfl{Names [ edit ]}

\errco{Chad has six official family names: a primary name, the paternal name and the maternal name, the two father's first names, and the mother's maiden name. For most people the whole family name is usually the same. If a man leaves his surname and his father's name, and is later married, his new surname is his husband's family name in addition to his husband's initial. An assumption of the mother's name by a man}\\
\midrule[0.5pt]
\textbf{\name{}}& Sahel Region of Africa, with a temperate rainless climate and abundant rainfall. Chad has quite a few diverse altitudes, including the mountain ranges of the Karo, Sawman, Kalun, and Chambek Mountains. The tallest mountain in Chad is the 2900 meter Karo Mountain, which is characterized by its relative dryness and dry season during the dry season.

For more than 600 years, Chad has been a part of Mali, but Mali was only recognized as a country in 1960 for political reasons. Sebbi and nearly all of Lake Chad are also officially part of Mali. Chad's population is of around 8 million, with approximately 85\% of the population living in the capital, Ndjamena. The population is also heavily dependent upon industry and agriculture and employs over 50 million people nationally. Per capita income has remained fairly steady for decades, and labor is a hot-topic of debate with labor unions, advocacy organizations,\\
    \bottomrule[1pt]
    \end{tabularx}
    \caption{Randomly selected generations from the test set of the Wikipedia domain.}
    \label{tab:case2}
\end{table}

\begin{table}[!ht]
    \centering
    \small
    \begin{tabularx}{\textwidth}{r X}
    \toprule[1pt]
    \textit{Prefix} & Lesnar's rapid rise to the top of WWE in 2002 led to a feud with The Undertaker, which involved a match at Unforgiven.\\ 
    \midrule[0.5pt] 
    \midrule[0.5pt]
    \textbf{CD}& \errco{Lesnar defeated The Dead Man in the match and became the WWE Champion, but The Deadman would return the following night to defeat him for his second WWE Championship reign}, and the feud continued on for years, culminating in a match at Survivor Series 2002, which The Undertaker won in an incredible fashion, with a Tombstone Piledriver. The match was so good, it was later included as a DVD extra in WWE: Back in Black. Lesnar's next match after Unforgiven, at Summer Slam 2003 against Kurt Angle and Shawn Stasiak, would also end in an Angle pin. The next match he had after Summer Slam, at No Way Out, saw him defeat Edge, who he had previously beaten for the World Championship at Wrestlemania X8, to become the first ever Triple Crown Champion. \errco{He would then defeat Triple H at No Mercy to retain his titles for the first and only time in the company, but lose them at Backlash to Batista}\\
    \midrule[0.5pt]
    \textbf{CS}&WWE Hall of Famer \errco{Jim Ross}was ringside for the match, and told Sports Illustrated in a story published last week that he had to get his camera out of his pants to take a picture of the two superstars in the locker room.

''I'm in the middle of this, and it's a little awkward,'' \errco{Ross said. ''The camera's in my pants. I have to get it out of my pants, and he's looking at me like, 'What are you doing?' And I'm like, 'You know what, Jim? This is my job, and I'm going to do it the best I can.' ''}

\errco{Ross, who has worked as a commentator for WWE since the mid-1980s, added}, ''They're two of the nicest guys I've ever been around in my life, and that's the way it should be. I mean, you can't get more professional than that.''\\
  \midrule[0.5pt]
    \textbf{Typical}&The match ended in tears when The Deadman dropped the WWE Champion with a Tombstone Piledriver. Lesnar and The Undertaker did not work together again until 2005. The studio also screened a segment from Brakkton's show about Lesnar's first match with the Ultimate Warrior at In Your House 12: A Reunion.

\errco{Speak Therm}

\errco{Good evening, State of the State. I'm Rick S. Ryals, Philadelphia's top city blogger. Now if you know anything about central Pennsylvania, you know that it's all about the chocolate chip cookies we call DEKERS. And to the prince himself, Duane Zane Dickenson, Philadelphia. This guy's the inspiration behind Strawberry Shortcake. So you know ''Why have a Philly Man?'' The Answer: Prince wasn't born or raised in the City of Brotherly Love. So all he knows about the area is from that TV show.}

\errco{Whenever the topic of location comes}\\
    \midrule[0.5pt]
    \textbf{Nucleus}&The match ended in tears when The Deadman dropped the WWE Champion with a Tombstone Piledriver. Lesnar and The Undertaker did not work together again until 2005. The lingering animosity led to a January 2007 match which ultimately concluded in Lesnar's first loss with the company. In November 2011, Lesnar wrestled The Undertaker at Wrestlemania XXVIII.

After the end of his successful run in WWE, Lesnar was stripped of the World Heavyweight Championship due to an in-ring injury suffered in a match with Paul Heyman on April 18, 2008. \errco{When he regained the title}, he returned to the ring in Extreme Rules against eventual WWE Champion CM Punk in a Hell in a Cell Match. \errco{This is Lesnar's only WWE Championship match.}

\errco{Watch: Lesnar vs. Punk: Hell in a Cell 2011}

\errco{Earlier this year, Brock Lesnar announced his retirement from MMA following a TKO loss to the former Strikeforce champion}\\
  \midrule[0.5pt] 
  \textbf{\name{}}& Originally, the match would have be called ''Opinion Day'' and the Undertaker would have cashed in his infamous Money in the Bank contract to beat Lesnar. After all, that was what he born the Roman Reigns, beating him to the top in the main event of the November 14, 2007 pay-per-view. But then Lesnar would again have his match been taken on an unrelated alleged injury, kinder than Bill Goldberg's foot, and would use it as a springboard to the WWF Championship. Injuries would also again become a factor in the decision regarding Lesnar's match at Hell in a Cell, too, with Vince McMahon CFO George Barrios stating that Lesnar had ''too much baggage'' to compete with the Undertaker and Triple H. Ultimately, Lesnar refused to face Triple H, and under pressure from his stock market holdings (in spite of having the WWE Title on the line on SmackDown), cashed\\
    \bottomrule[1pt]
    \end{tabularx}
    \caption{Randomly selected generations from the test set of the Wikipedia domain.}
    \label{tab:case3}
\end{table}

\begin{table}[!ht]
    \centering
    \small
    \begin{tabularx}{\textwidth}{r X}
    \toprule[1pt]
    \textit{Prefix} & Attalea species have a long history of human utilisation. Carbonised Attalea maripa seeds have been found in archaeological sites in Colombia dating\\ 
    \midrule[0.5pt] 
    \midrule[0.5pt] 
    \textbf{CD}&to the Spanish/Portuguese settlements of the New World, and to the stone age. Today's strictly cultivated Attalea diacetiana seeds are a member of the digestive plant family being of the genus Attalea.

Originating in the New World, Attalea species began to be cultivated in Europe between c. 250 AD and 1600 AD * before entering the middle east during the Ottoman Empire (16th/17th century). From the 18th century onward, farmers in North America, South America, Madagascar, Africa, and Northern Australia began to use Attalea seeds to increase their global food production. From Pakistan maps exist showing Attalea making their way into Gujarat in North India, and from the Southern Indian region ASE (India), Pakistan, Bangladesh, Sri Lanka, and China. True to their early, Middle Eastern roots, the Attalea family has spread around the globe, possesing a wide diversity of beautiful flowers\\
    \midrule[0.5pt]
    \textbf{CS}& back to 3000 B.C. (1, 2).

It is thought that the use of Attalea seeds as a food source was introduced to New Zealand by Polynesians who settled the South Island in the 17th century (3). In recent years there has been a resurgence of interest in the plant, with the growing number of research papers and the publication of a book by Dr John Beddingfield in 2008 (4). This book was the first to provide a comprehensive overview of the history of the plant and its use in New Zealand.

A recent study has shown that the plant is a good source of vitamin C, which is essential for a healthy immune system (5). This is in contrast to the widely held belief that it is a vitamin B-deficient food source and should be avoided by vegans, vegetarians and those with low intake of vitamin B-12 due to the risk of anaemia (6).\\
\midrule[0.5pt]
\textbf{Typical}& back to the Middle and Archaic periods, when cranial and regal remains may have been deliberately buried. The seed also has a history of use as a powder by leather tanners.

Now, a team led by Dr John Choi, a curator at The National Museum, Singapore, has investigated the extraction and risk of bee venom in Attalea seeds.

'Early on, the bee's venom would have been used before treating wounds in leather engineering,' says Dr Choi, who presented his research at the 101st Scientific Sessions of the American Chemical Society 2017. 'After the invention of the stronger synthetic bee venom in the 19th century, researchers for some time were unconcerned about the risk of allergic reactions.'

Dr Lee Chin-yen; The National Museum

Dr Lee Chin-yen, who led the research, says: 'Because there is bias in the content of the Chinese literature on Apis cerana, most sensitive\\
\midrule[0.5pt]
\textbf{Nucleus}&back to the Middle and Archaic periods, when cranial and dental artefacts may have helped people to start cooking the plant.

There are many reasons why trees have been used for food or shelter, including their ability to grow quickly, their high water content and relatively high yield of crops, although forage plants are more typically grown for their nuts and seeds.

'Early people chewed on attalea to help them reach their food in past times,' says Zoran Kalerah, paleobiologist at the Natural History Museum of Denmark, who led the study.

'It was actually the starch in the leaves that aided their ability to reach their food. Basically what they do is they pulverise the attalea leaves and chew them to produce starch.

'This is the same stuff that they use for making baked goods and pastries, as the sticky content of the leaves allows them to drip onto a flat surface when\\
    \midrule[0.5pt]
    \textbf{\name{}}& to the Spanish/Portuguese settlements of the New World, and to the stone age. Today's strictly cultivated Attalea diacetiana seeds are a member of the digestive plant family being of the genus Attalea.

Originating in the New World, Attalea species began to be cultivated in Europe between c. 250 AD and 1600 AD * before entering the middle east during the Ottoman Empire (16th/17th century). From the 18th century onward, farmers in North America, South America, Madagascar, Africa, and Northern Australia began to use Attalea seeds to increase their global food production. From Pakistan maps exist showing Attalea making their way into Gujarat in North India, and from the Southern Indian region ASE (India), Pakistan, Bangladesh, Sri Lanka, and China. True to their early, Middle Eastern roots, the Attalea family has spread around the globe, possesing a wide diversity of beautiful flowers\\
     \bottomrule[1pt]
    \end{tabularx}
    \caption{Randomly selected generations from the test set of the Wikipedia domain.}
    \label{tab:case4}
\end{table}

\begin{table}[!ht]
    \centering
    \small
    \begin{tabularx}{\textwidth}{r X}
    \toprule[1pt]
    \textit{Prefix} & Amos was born in Macclesfield, Cheshire and was a member of the Crewe Alexandra academy until he was released at the age of 10. While\\ 
    \midrule[0.5pt]  
    \midrule[0.5pt] 
    \textbf{CD} & at Crewe, Amos played for the Macclesfield Town youth teams. After leaving the academy he signed for the Crewe Alexandra reserves and was then released by them. Amos joined the Manchester City reserves and was then released. Amos then signed with Sheffield Wednesday, but was released at the age of 16 and joined the Crewe Alexandra academy again. After a spell with the academy Amos was released again, and then signed by Manchester City for the season in which they lost the League Championship final. Amos was released by City again and joined Sheffield United. He was released from United after two months and signed by Sheffield Wednesday, who he stayed at until he left them at the age of 23, having made over 200 league appearances. Amos has also represented the England youth sides at under 16, 17, 19, 21, 23, 25 and 28, and the under 19s at under 19, 21 and 25, and has been capped at every age group except for under \\
    \midrule[0.5pt]
    \textbf{CS} & his career was on the up, a knee injury in his first-team training led to him being loaned to Barnsley, where he made his debut for the Tykes in a 3-0 win over Rochdale on the opening day of the 1992-93 season.

The following season, Barnsley went on to win the League Two title and Amos was named PFA Young Player of the Year after netting 15 goals in all competitions. His goalscoring exploits earned him a move to Sheffield Wednesday for the 1993-94 season, where he played a key role in the club's promotion to the Premier League.

In the summer of 1994, Amos signed for West Ham United on a free transfer after his contract with Barnsley ran out. He went on to become the Hammers' top scorer in the Premier League that season, netting 14 goals as the Hammers finished runners-up to Manchester United. \\
\midrule[0.5pt]
\textbf{Typical} & a youngster (age 6) he appeared in this video on YOG TV. His first professional appearance came after coming off the bench to score a hat-trick for the Peterborough United youth team. He then appeared for Heracles Antwerp in the Belgian Top League and then went on trial with Celtic before the 1974 season. As a pyrotechnician he formed part of Charles and David White's all-time FA Cup XI. He then went on to play for Wrexham, Shrewsbury Town and at Bradford City after he was released from Heracles Town at the end of the 1974 season.

As with FM, Milton Keynes Dons have had their share of fireworks in the FA Cup. They have notched up 89 playoff goals, which is eight more than this year's champions

HYPOTHESIS: Tottenham will not make the final

HYPOTHESIS: United will be in the final \\
\midrule[0.5pt]
\textbf{Nucleus} & a youngster, he spent six months on the boats travelling on the Severn and Tyne rivers.

Spends the majority of his time in Ashton-under-Lyne. He works for Alfreton Community charity as part of the Lorna Beddoe project.

Internationals appearances: 16

Managerial past

Prior to moving into coaching, Amos worked as a referee and with the FA's student programme. He worked with the SFA's student team on a mentoring programme for under-15s and he was the Assistant Referee for the Under-16s and Under-18s. He was also part of the Professional Game Match Officials (PGMO) program, which is for those who wish to progress in an official career but have not yet signed professional contracts.

What he does

Amos has previously managed the reserves for Cheshire in the North West Counties League Premier Division and was \\
\midrule[0.5pt]
\textbf{\name{}} & growing up, Amos worked as a bookmaker in a casino in Macclesfield.

His first professional football coach was John Olivieri ( playing for Crazy Gang, Liverpool and AC Milan) working under Fingleton and Hart for the Crewe Rams.

Amos moved to Accrington Stanley and made his professional debut as a substitute in March 1971. He made his debut in a 3-0 home win over Holland Farm in July that year.

In the summer of 1972 Amos signed for White Hart Lane. Amos achieved success in the League with the team, scoring 7 goals in 14 appearances in 1971-72. Amos' first team career was over before it started however for him.

After Martin Atkinson resigned in 1974, Amos was promoted to the Championship as a utility player. In his second season of the promotion he scored 10 goals in 20 games against three of England's Premier clubs; Hereford, Manchester United and Leeds United. \\ 
    \bottomrule[1pt]
    \end{tabularx}
    \caption{Randomly selected generations from the test set of the Wikipedia domain.}
    \label{tab:case5}
\end{table}

\begin{table}[!ht]
    \centering
    \small
    \begin{tabularx}{\textwidth}{r X}
    \toprule[1pt]
    \textit{Prefix} & Oxaziridine derivatives were first reported in the mid - 1950s by Emmons and subsequently by Krimm and Horner and Jur\\
    \midrule[0.5pt]
    \midrule[0.5pt]
    \textbf{CD} & gen Schulten. Oxaziridine derivatives were first reported in the mid - 1950s byand subsequentbyandandwere used as antifertility compounds in the mid-to late-1970's and 1980s, respectively, for treatment of male factor infertility. The compounds were later shown to have a variety of other biological activities. In vitro, the compounds inhibited cell cycle progression in various human and rat cell lines, inhibited cell growth in various cell lines in culture and inhibited proliferation and invasion in various animal cell types, including breast cancer and colon carcinomas. In addition to these in vivo activities in animal cell cultures and animal tumor models, in vitro data from several in vitro studies indicated the compounds to possess estrogen-like activity, including inhibition of human estrogen receptor a (HERS)-stimulated cell proliferation in a concentration dependent fashion. The in vitro estrogenicity was not due to direct binding to ERs but was mediated via a non-ER binding mechanism\\
    \midrule[0.5pt]
    \textbf{CS} & gens.

Aminoglycosides and aminoglycoside anhydrides have been used as anticancer agents for more than 50 yeard years. The use of these compounds has been based on their ability to inhibit growth of cancer cells and/or induce apoptosis (programmed cell death), which is a mechanism by which cancer cells escape from the immune system and proliferate in the absence of chemotherapeutic agents. However, there is a paucity of data in this area, which is the reason for the present review. This review will focus on the pharmacology/tumorigenicity of a number of compounds derived from the class of aminoalkyl esters, which are structurally related to benzimidazoles but differ in their structure and mechanism of action. Aminoglycosides and aminoglycoside anhydrides have been used as anticancer agents for more than 50 yeard years. The\\
\midrule[0.5pt]
\textbf{Typical} & gens. For many years, no complete pharmacological characterization of oxaziridine was capable. In 1982 Fabro et al. described the first idea ( Epstein \& O'Brien, 1993 ) of a new mitochondrial impaired glutamate signalling pathway. The blue and green fluorescence specimens seen by Fabro et al (1983) are for the Fu-related alkaloid ( henbane and nocioin ). Disseminated from amphibila (The crabfoot vision constrictor) tadpoles, the xenon-and hydrogen peroxide-(2-hydroxyethanol) are produced from oxygen by electron acceptor monooxygenases (Bu (). Nitric oxide is transported by peroxynitrite.-
Sudwarasingh Zingerke, Brian E. Jackson 1, Gerhard Rittner 1 Department of Pharmacy, University of Georgia School of Pharmacy, Athens, Georgia 30602-7155, www.uga. \\
\midrule[0.5pt]
\textbf{Nucleus} & gens. For many years, no complete pharmacological characterization of oxaziridine was reported. In 1982, a 1 1/2 hour infusion of oxaziridine hydrochloride (5 × 10 9 -10 9 g/100 mL) in human volunteers was found to reduce excretion of dietary (glucose) and added-fat (citric acid) sugars and preserved 26\% of exercise time. Several accidental ingestion of oxaziridine (1 g) by haemodialysis patients resulted in blood-poisoning and death. 17 In 1991, doses of 10 to 100 mg/kg in haemodialysis patients were found to cause cardiotoxicity in 2 cases. 18 In 2004, Bico et al. reported a 2 -year post-mortem study of the muscles of oxaziridine abusers showing heart enlargement with increased pulmonary artery ''blaming'' numbers and a lack of complex V-fibon \\
\midrule[0.5pt]
\textbf{\name{}} & gens. Neither reported full anisidine binding to the terminal region of kappa-opioid receptors. Instead, oxaziridine was one of two anisole derivatives that failed to bind to rat brain A-alpha(1)DA receptors under binding conditions similar to those used for benzodiazepine binding in high concentrations in rat brain (Hoffman, 1996). Yet, oxaziridine did bind to human receptors of the A-beta(1)D OH and A-beta(1) isoforms, and the anisoles displayed similar pharmacological efficacy (Krimm and Horner, 1954). Oxaziridine and other anisole derivatives were subsequently shown to bind to rat brain and human brain A-beta(1) D OH receptors with similar efficacy. However, oxaziridine and other anisole derivatives were less efficient K(4) receptors and inhibited competitive binding to human [3H]dom\\
    \bottomrule[1pt]
    \end{tabularx}
    \caption{Randomly selected generations from the test set of the Wikipedia domain.}
    \label{tab:case6}
\end{table}

\end{document}